\documentclass{article}

\PassOptionsToPackage{authoryear}{natbib}

\usepackage[preprint]{neurips_2026}

\usepackage[utf8]{inputenc} 
\usepackage[T1]{fontenc}    
\usepackage{hyperref}       
\usepackage{url}            
\usepackage{booktabs}       
\usepackage{amsfonts}       
\usepackage{nicefrac}       
\usepackage{microtype}      
\usepackage{xcolor}         

\usepackage[textsize=tiny]{todonotes} 
\usepackage{subcaption}

\usepackage{amsmath,amsthm,amssymb}
\usepackage{thmtools,thm-restate}  
\usepackage{xspace}  
\usepackage{wrapfig}
\usepackage{natbib}
\usepackage{multirow} 
\usepackage{xcolor}
\usepackage{paralist}
\usepackage{longtable}
\usepackage{ifthen}

\usepackage{tikz}
\usepackage{pgfplots}
\usetikzlibrary{calc}
\usetikzlibrary{intersections}
\usetikzlibrary{backgrounds}
\usepgfplotslibrary{statistics}
\usepgfplotslibrary{fillbetween}
\usetikzlibrary{matrix}
\usetikzlibrary{positioning}
\usetikzlibrary{fit}
\usepackage{pgfplotstable}  
\usepackage{minted}
\usepackage{bbm}
\usepackage{xstring}

\usepackage{eda}
\usepackage{eda-colors}

\definecolor{color(mmci)}{cmyk}{1,0.4,0,0}
\definecolor{color(cispa)}{rgb}{0.008,0.356,0.584}
\definecolor{color(helmholtzblue)}{RGB}{0, 90, 160}
\definecolor{color(helmholtzgreen)}{RGB}{140, 180, 35}

\definecolor{wong-blue}{RGB}{0, 114, 178}
\definecolor{wong-orange}{RGB}{230, 159, 0}
\definecolor{wong-green}{RGB}{0, 158, 115}
\definecolor{wong-reddishpurple}{RGB}{204, 121, 167}
\definecolor{wong-skyblue}{RGB}{86, 180, 233}
\definecolor{wong-vermillion}{RGB}{213, 94, 0}
\definecolor{wong-yellow}{RGB}{240, 228, 66}

\definecolor{tol-q-bright-blue}{HTML}{4477AA}
\definecolor{tol-q-bright-cyan}{HTML}{66CCEE}
\definecolor{tol-q-bright-green}{HTML}{228833}
\definecolor{tol-q-bright-yellow}{HTML}{CCBB44}
\definecolor{tol-q-bright-red}{HTML}{EE6677}
\definecolor{tol-q-bright-purple}{HTML}{AA3377}
\definecolor{tol-q-bright-grey}{HTML}{BBBBBB}

\definecolor{tol-q-high-contrast-white}{HTML}{FFFFFF}
\definecolor{tol-q-high-contrast-yellow}{HTML}{DDAA33}
\definecolor{tol-q-high-contrast-red}{HTML}{BB5566}
\definecolor{tol-q-high-contrast-blue}{HTML}{004488}
\definecolor{tol-q-high-contrast-black}{HTML}{000000}

\definecolor{tol-q-vibrant-blue}{HTML}{0077BB}
\definecolor{tol-q-vibrant-cyan}{HTML}{33BBEE}
\definecolor{tol-q-vibrant-teal}{HTML}{009988}
\definecolor{tol-q-vibrant-orange}{HTML}{EE7733}
\definecolor{tol-q-vibrant-red}{HTML}{CC3311}
\definecolor{tol-q-vibrant-magenta}{HTML}{EE3377}
\definecolor{tol-q-vibrant-grey}{HTML}{BBBBBB}

\definecolor{tol-q-muted-indigo}{HTML}{332288}
\definecolor{tol-q-muted-cyan}{HTML}{88CCEE}
\definecolor{tol-q-muted-teal}{HTML}{44AA99}
\definecolor{tol-q-muted-green}{HTML}{117733 }
\definecolor{tol-q-muted-olive}{HTML}{999933}
\definecolor{tol-q-muted-sand}{HTML}{DDCC77}
\definecolor{tol-q-muted-rose}{HTML}{CC6677}
\definecolor{tol-q-muted-wine}{HTML}{882255}
\definecolor{tol-q-muted-purple}{HTML}{AA4499}
\definecolor{tol-q-muted-pale-grey}{HTML}{DDDDDD}

\definecolor{tol-mark-pale-blue}{HTML}{BBCCEE}
\definecolor{tol-mark-pale-cyan}{HTML}{CCEEFF}
\definecolor{tol-mark-pale-green}{HTML}{CCDDAA}
\definecolor{tol-mark-pale-yellow}{HTML}{EEEEBB}
\definecolor{tol-mark-pale-red}{HTML}{FFCCCC}
\definecolor{tol-mark-pale-grey}{HTML}{DDDDDD}

\definecolor{tol-mark-dark-blue}{HTML}{222255}
\definecolor{tol-mark-dark-cyan}{HTML}{225555}
\definecolor{tol-mark-dark-green}{HTML}{225522}
\definecolor{tol-mark-dark-yellow}{HTML}{666633}
\definecolor{tol-mark-dark-red}{HTML}{663333}
\definecolor{tol-mark-pale-grey}{HTML}{555555}

\definecolor{tol-q-light-blue}{HTML}{77AADD}
\definecolor{tol-q-light-cyan}{HTML}{99DDFF}
\definecolor{tol-q-light-mint}{HTML}{44BB99}
\definecolor{tol-q-light-pear}{HTML}{BBCC33}
\definecolor{tol-q-light-olive}{HTML}{AAAA00}
\definecolor{tol-q-light-yellow}{HTML}{EEDD88}
\definecolor{tol-q-light-orange}{HTML}{EE8866}
\definecolor{tol-q-light-pink}{HTML}{FFAABB}
\definecolor{tol-q-light-pale-grey}{HTML}{DDDDDD}

\colorlet{color(ourmethod)}{tol-q-muted-indigo} 	
\colorlet{color(ourmethodsemi)}{tol-q-muted-indigo} 	
\colorlet{color(asso)}{tol-q-muted-cyan} 	
\colorlet{color(grecond)}{tol-q-muted-teal} 	
\colorlet{color(nmf)}{tol-q-muted-green} 	
\colorlet{color(nmfr)}{tol-q-muted-olive} 	
\colorlet{color(sofa)}{tol-q-muted-sand} 	
\colorlet{color(binaps)}{tol-q-muted-rose} 	
\colorlet{color(pimp)}{tol-q-muted-wine}

\definecolor{TSneColor(0.0)}{RGB}{136, 204, 238}
\definecolor{TSneColor(1.0)}{RGB}{117, 194, 214}
\definecolor{TSneColor(2.0)}{RGB}{98, 185, 190}
\definecolor{TSneColor(3.0)}{RGB}{79, 175, 166}
\definecolor{TSneColor(4.0)}{RGB}{62, 164, 140}
\definecolor{TSneColor(5.0)}{RGB}{47, 149, 112}
\definecolor{TSneColor(6.0)}{RGB}{33, 135, 83}
\definecolor{TSneColor(7.0)}{RGB}{19, 121, 54}
\definecolor{TSneColor(8.0)}{RGB}{26, 98, 72}
\definecolor{TSneColor(9.0)}{RGB}{35, 74, 96}
\definecolor{TSneColor(10.0)}{RGB}{45, 50, 120}
\definecolor{TSneColor(11.0)}{RGB}{67, 50, 134}
\definecolor{TSneColor(12.0)}{RGB}{115, 98, 130}
\definecolor{TSneColor(13.0)}{RGB}{163, 146, 125}
\definecolor{TSneColor(14.0)}{RGB}{210, 193, 120}
\definecolor{TSneColor(15.0)}{RGB}{206, 193, 104}
\definecolor{TSneColor(16.0)}{RGB}{187, 178, 85}
\definecolor{TSneColor(17.0)}{RGB}{168, 164, 66}
\definecolor{TSneColor(18.0)}{RGB}{156, 150, 55}
\definecolor{TSneColor(19.0)}{RGB}{171, 135, 74}
\definecolor{TSneColor(20.0)}{RGB}{185, 121, 94}
\definecolor{TSneColor(21.0)}{RGB}{199, 107, 113}
\definecolor{TSneColor(22.0)}{RGB}{191, 89, 113}
\definecolor{TSneColor(23.0)}{RGB}{172, 70, 103}
\definecolor{TSneColor(24.0)}{RGB}{153, 51, 94}
\definecolor{TSneColor(25.0)}{RGB}{137, 35, 87}
\definecolor{TSneColor(26.0)}{RGB}{147, 45, 106}
\definecolor{TSneColor(27.0)}{RGB}{156, 54, 125}
\definecolor{TSneColor(28.0)}{RGB}{166, 64, 144}
\definecolor{TSneColor(29.0)}{RGB}{178, 92, 164}
\definecolor{TSneColor(30.0)}{RGB}{192, 135, 183}
\definecolor{TSneColor(31.0)}{RGB}{207, 178, 202}
\definecolor{TSneColor(32.0)}{RGB}{221, 221, 221}

\definecolor{mygreen}{rgb}{0.0, 0.5, 0.0}
\definecolor{mygray}{rgb}{0.3, 0.3, 0.3}

\definecolor{variolila}{RGB}{115,100,137}
\definecolor{lilacgray}{RGB}{152,150,164}
\definecolor{red1}{RGB}{201,59,69}
\definecolor{cc1}{rgb}{0.59, 0.78, 0.64}
\definecolor{niceblue}{RGB}{3, 79, 132}
\definecolor{iceblue}{RGB}{128, 182, 207}
\definecolor{cc1}{rgb}{0.59, 0.78, 0.64}
\definecolor{niceblue}{RGB}{3, 79, 132}
\definecolor{iceblue}{RGB}{128, 182, 207}
\definecolor{ca3}{rgb}{0.83, 0.69, 0.22}
\definecolor{red1}{RGB}{201,59,69}
\definecolor{red2}{RGB}{169,22,48}
\definecolor{red3}{RGB}{249,129,115}
\definecolor{cl1}{rgb}{0.61, 0.77, 0.89} 
\definecolor{cl2}{rgb}{0.64, 0.68, 0.82} 
\definecolor{cl3}{rgb}{0.85, 0.65, 0.13} 
\definecolor{ca1}{rgb}{0.6, 0.81, 0.93} 
\definecolor{ca2}{RGB}{139, 204, 109} 
\definecolor{rosequartz}{RGB}{247,202,201}
\definecolor{serenity}{RGB}{145,168,209}
\definecolor{peachecho}{RGB}{247,120,107}
\definecolor{snorkelblue}{RGB}{3,79,132}
\definecolor{buttercup}{RGB}{250,224,60}
\definecolor{limpetshell}{RGB}{152,221,222}
\definecolor{lilacgray}{RGB}{152,150,164}
\definecolor{fiesta}{RGB}{221,65,50}
\definecolor{icedcoffee}{RGB}{177,143,106}
\definecolor{greenflash}{RGB}{121,199,83}
\definecolor{purple}{RGB}{153,0,153}
\definecolor{turquise}{RGB}{0,153,153}
\definecolor{poop}{RGB}{203,178,52}
\definecolor{blue1}{RGB}{215, 216, 233}
\definecolor{blue2}{RGB}{243, 243, 248}
\definecolor{varioblue}{RGB}{68, 114, 157}
\definecolor{variogreen}{RGB}{125, 171, 113}

\usepackage{prettyplots}  
\pgfplotsset{compat=1.18}  
\usepgfplotslibrary{groupplots}

\usetikzlibrary{external, matrix, positioning}
\usetikzlibrary{arrows.meta,positioning,fit,backgrounds}

\newcommand{\missing}{m}
\newcommand{\missingij}{\missing_{i}^{(r)}}
\newcommand{\missingmask}{M}

\newcommand{\featvar}{X}
\newcommand{\featset}{\mathbf{X}}
\newcommand{\feat}{x}
\newcommand{\featvec}{\mathbf{x}}
\newcommand{\featvecr}{\mathbf{x}^{(r)}}
\newcommand{\cell}[2]{\feat_{#1}^{(#2)}}
\newcommand{\cellij}{\feat_i^{(r)}}

\newcommand{\attention}{A}
\newcommand{\Hord}{H^{\text{ord}}}
\newcommand{\Hpred}{H^{\text{pred}}}

\newcommand{\varinc}{\delta}
\newcommand{\varincij}{\varinc_{i,j}}

\newcommand{\order}{\pi}

\newcommand{\hatorder}{\hat{\pi}}
\newcommand{\orderscore}{s}
\newcommand{\orderscores}{\textbf{s}}

\newcommand{\nfeat}{d}
\newcommand{\nsamples}{n}
\newcommand{\dataset}{\mathcal{D}}

\newcommand{\ourmethod}{\textsc{TabOrder}\xspace}
\newcommand{\avici}{\textsc{AVICI}\xspace}
\newcommand{\nogam}{\textsc{NoGAM}\xspace}
\newcommand{\dasmethod}{\textsc{DAS}\xspace} 
\newcommand{\randsort}{\textsc{RndSrt}\xspace} 
\newcommand{\rtwosort}{\textsc{R2Srt}\xspace}

\newcommand{\tabpfn}{\textsc{TabPFN}\xspace}
\newcommand{\tabimpute}{\textsc{TabImpute}\xspace}
\newcommand{\gain}{\textsc{GAIN}\xspace}
\newcommand{\miwae}{\textsc{MIWAE}\xspace}
\newcommand{\naim}{\textsc{NAIM}\xspace}
\newcommand{\mice}{\textsc{MICE}\xspace}

\newcommand{\meanimp}{\textsc{ColMean}\xspace}
\newcommand{\knnimp}{\textsc{KNN}\xspace}
\newcommand{\iceimp}{\textsc{ICE}\xspace}
\newcommand{\missforest}{\textsc{MissForest}\xspace}
\newcommand{\hyperimpute}{\textsc{HyperImpute}\xspace}
\newcommand{\xgboost}{\textsc{XGBoost}\xspace}
\newcommand{\softimpute}{\textsc{SoftImpute}\xspace}

\newcommand{\topic}{\textsc{Topic}\xspace}
\newcommand{\cam}{\textsc{Cam}\xspace}
\newcommand{\scoremethod}{\textsc{Score}\xspace}

\newcommand{\notears}{\textsc{Notears}\xspace}

\newcommand{\R}{\mathbb{R}}
\newcommand{\E}{\mathbb{E}}

\newcommand{\divtop}{d_\text{TOP}}
\newcommand{\pa}{\mathit{pa}\xspace}

\newcommand{\G}{G}

\colorlet{arch-green}{pr-color1a}   
\colorlet{arch-blue}{pr-color1b} 
\colorlet{arch-orange}{pr-color1c} 

\declaretheorem{theorem}

\DeclareMathOperator*{\argmax}{arg\,max}

\DeclareMathOperator{\Var}{Var}
\newcommand{\indep}{\perp \!\!\! \perp}

\let\emptyset\varnothing

\colorlet{color-topic}{pr-color1n}
\colorlet{color-cam}{pr-color1b}
\colorlet{color-nadd-gp}{pr-color1g}
\colorlet{color-ours}{pr-color1g}
\colorlet{color-avici}{pr-color1h}
\colorlet{color-nogam}{pr-color1o}
\colorlet{color-score}{pr-color1e}
\colorlet{color-das}{pr-color1f}
\colorlet{color-notears}{pr-color1k}
\colorlet{color-rtwosort}{tcss-slate4}

\colorlet{color-add-gp}{pr-color1g}
\colorlet{color-nadd-gp-alt}{pr-color1g}
\colorlet{color-add-sp}{pr-color1g}

\colorlet{nicecolor(violet)}{patriarch}
\colorlet{nicecolor(lviolet)}{mediumred-violet}
\colorlet{nicecolor(terracotta)}{tan}
\colorlet{nicecolor(yellow)}{goldenrod} 
\colorlet{nicecolor(green)}{yellow-green}
\colorlet{nicecolor(greenish)}{pistachio}
\colorlet{nicecolor(greenisher)}{mediumspringbud}
\colorlet{nicecolor(viola)}{wildblueyonder}
\colorlet{nicecolor(lav)}{languidlavender}
\colorlet{nicecolor(turquoise)}{ballblue}
\colorlet{nicecolor(paleblue)}{palecerulean}
\colorlet{nicecolor(starkorange)}{orangepeel}
\colorlet{nicecolor(orange)}{deepsaffron}
\colorlet{nicecolor(blue)}{glaucous}
\colorlet{nicecolor(dblue)}{lapislazuli}
\colorlet{nicecolor(cyan)}{darkcyan}
\colorlet{nicecolor(lblue)}{iceberg}
\colorlet{nicecolor(dvio)}{oldmauve}
\colorlet{color(ours)}{nicecolor(green)}  
\colorlet{color(ours2)}{dollarbill}  
\colorlet{color(cam)}{nicecolor(cyan)}
\colorlet{color(score)}{nicecolor(lblue)}
\colorlet{color(globe)}{nicecolor(violet)} 
\colorlet{color(resit)}{nicecolor(lav)}
\colorlet{color(icalingam)}{nicecolor(yellow)} 
\colorlet{color(dirlingam)}{nicecolor(yellow)} 
\colorlet{color(notears)}{nicecolor(paleblue)}
\colorlet{color(pc)}{nicecolor(lav)}
\colorlet{color(fci)}{nicecolor(greenish)}
\colorlet{color(ges)}{mangotango}
\colorlet{color(pc)}{nicecolor(orange)}
\colorlet{color(fci)}{nicecolor(greenish)}
\colorlet{color(ges)}{mangotango}
\colorlet{color(cdnod)}{nicecolor(cyan)}
\colorlet{color(pcmci)}{nicecolor(viola)}
\colorlet{color(varlingam)}{mangotango}
\colorlet{color(dynotears)}{nicecolor(greenish)}

\pgfplotscreateplotcyclelist{topic-line}{  
	{dollarbill,  mark=diamond, mark options={scale=0.2}, line width = 1.2pt}, 
	{color(cam),  mark=diamond, mark options={scale=0.2}},  
	{color(score),mark=diamond, mark options={scale=0.2}}, 
	{color(resit), mark=diamond, mark options={scale=0.2}}, 
	{color(icalingam), mark=diamond, mark options={scale=0.2}},  
	{color(notears), mark=diamond, mark options={scale=0.2}}, 
	{color(globe),  mark=diamond, mark options={scale=0.2}}, 
	{color(pc), mark=diamond, mark options={scale=0.2}}, 
	{color(fci), mark=diamond, mark options={scale=0.2}}, 
	{color(ges), mark=diamond, mark options={scale=0.2}}, 
	{color(cdnod), mark=diamond, mark options={scale=0.2}}, 
	{color(pcmci), mark=diamond, mark options={scale=0.2}}, 
	{color(varlingam), mark=diamond, mark options={scale=0.2}}, 
	{color(dynotears), mark=diamond, mark options={scale=0.2}}, 
}

\pgfplotscreateplotcyclelist{taborder-box}{%
	{pr-color1n, fill=pr-color1n!40}, 
	{pr-color1b, fill=pr-color1b!40}, 
	{pr-color1g, fill=pr-color1g!40}, 
	{pr-color1h, fill=pr-color1h!40}, 
	{pr-color1o, fill=pr-color1o!40}, 
	{pr-color1e, fill=pr-color1e!40},
	{pr-color1f, fill=pr-color1f!40}, 
	{pr-color1k, fill=pr-color1k!40}, 
	{tcss-slate4, fill=tcss-slate4!40},  
	{tcss-slate6, fill=tcss-slate6!40},  
	{tcss-slate8, fill=tcss-slate8!40},  
}

\colorlet{flipcol}{pr-color1b!80!black}
\colorlet{fliplow}{flipcol!10}
\colorlet{flipmid}{flipcol!50}
\colorlet{fliphigh}{flipcol}
\colorlet{rankposcol}{goldenrod}
\colorlet{ranknegcol}{pr-color1b!80!black} 
\newcommand{\flipnode}[4]{%
	\pgfmathparse{#4 < 50 ? int(2*#4) : int(2*(#4-50))}
	\let\mixval\pgfmathresult
	
	\ifnum#4<50
	\node[ 
	circle,
	minimum size=7mm,
	inner sep=1pt,
	fill=fliplow!\mixval!flipmid
	] (#1) #2 {#3};
	\else
	\node[ 
	circle,
	minimum size=7mm,
	inner sep=1pt,
	fill=flipmid!\mixval!fliphigh
	] (#1) #2 {#3};
	\fi
}
\newcommand{\SachsFlipGraph}[2]{%
	\begin{tikzpicture}[
		font=\scriptsize,
		prot/.style={
			draw=tcss-slate3,
			circle,
			minimum size=8mm,
			inner sep=1pt
		},
		ivwhite/.style={
			draw=white,
			white,
			fill=white,
			circle,
			dashed,
			thick,
			minimum size=7mm,
			inner sep=1pt,
			font=\scriptsize\bfseries
		},
		ivact/.style={
			draw=pr-color1a!60!black,
			fill=pr-color1a!10,
			circle,
			dashed,
			thick,
			minimum size=7mm,
			inner sep=1pt,
			font=\scriptsize\bfseries
		},
		ivinh/.style={
			draw=pr-color1b!70!black,
			fill=pr-color1b!10,
			circle,
			dashed,
			thick,
			minimum size=7mm,
			inner sep=1pt,
			font=\scriptsize\bfseries
		},
		>=Latex
		]
		
		\flipnode{pkc}{at (0,4)}{PKC}{\pkcflip}
		\flipnode{pka}{at (0,2.4)}{PKA}{\pkaflip}
		\flipnode{raf}{at (1.7,2.4)}{Raf}{\rafflip}
		\flipnode{jnk}{at (-1.7,2.4)}{Jnk}{\jnkflip}
		\flipnode{p38}{at (-1.7,1.1)}{P38}{\pflip}
		\flipnode{plcg}{at (-2.4,0.2)}{Plc${}_{\gamma}$}{\plcgflip}
		\flipnode{pip2}{at (-3.3,-1.0)}{PIP2}{\piptwoflip}
		\flipnode{pip3}{at (-1.7,-1.0)}{PIP3}{\pipthreeflip}
		\flipnode{akt}{at (0,-1.0)}{Akt}{\aktflip}
		\flipnode{erk}{at (1.7,-1.0)}{Erk}{\erkflip}
		\flipnode{mek}{at (1.7,1.1)}{Mek}{\mekflip}
		
		\path[-{Latex[length=2mm,width=1mm]}, line width=0.8pt, tcss-slate3]
		(raf) edge[] (mek)
		(mek) edge[] (erk)
		(plcg) edge[bend right=10] (pip2)
		(plcg) edge[bend left=10] (pip3)
		(pip3) edge[] (pip2)
		(pip3) edge[] (akt)
		(pip2) edge[bend left=35] (pkc)
		(pkc) edge[] (raf)
		(pkc) edge[bend left=12] (mek)
		(pkc) edge[] (jnk)
		(pkc) edge[bend right=12] (p38)
		(pka) edge[] (raf)
		(pka) edge[] (mek)
		(pka) edge[bend left=10] (erk)
		(pka) edge[] (akt)
		(pka) edge[] (jnk)
		(pka) edge[] (p38)
		;
		
		\node[\iOneStyle]   (i1) at (-1.45,5.45) {1};
		\node[\iTwoStyle]   (i2) at (1.45,5.45) {2};
		\node[\iThreeStyle] (i3) at (-1.55,4.15) {3};
		\node[\iFourStyle]  (i4) at (1.95,4.00) {4};
		\node[\iFiveStyle]  (i5) at (0,-2.35) {5};
		\node[\iSixStyle]   (i6) at (3.25,1.1) {6};
		\node[\iSevenStyle] (i7) at (-1.7,-2.35) {7};
		\node[\iEightStyle] (i8) at (-3.30,-2.35) {8};
		
		\iOneArrow
		\iTwoArrow
		\iThreeArrow
		\iFourArrow
		\iFiveArrow
		\iSixArrow
		\iSevenArrow
		\iEightArrow
		
	\end{tikzpicture}
} 

 \newcommand{\SachsFlipGraphSmall}[2]{%
 	\begin{tikzpicture}[
 		font=\scriptsize,
 		prot/.style={
 			draw=tcss-slate3,
 			circle,
 			minimum size=8mm,
 			inner sep=1pt
 		},
 		ivwhite/.style={
 			draw=white,
 			white,
 			fill=white,
 			circle,
 			dashed,
 			thick,
 			minimum size=7mm,
 			inner sep=1pt,
 			font=\scriptsize\bfseries
 		},
 		ivact/.style={
 			draw=pr-color1a!60!black,
 			fill=pr-color1a!10,
 			circle,
 			dashed,
 			thick,
 			minimum size=7mm,
 			inner sep=1pt,
 			font=\scriptsize\bfseries
 		},
 		ivinh/.style={
 			draw=pr-color1b!70!black,
 			fill=pr-color1b!10,
 			circle,
 			dashed,
 			thick,
 			minimum size=7mm,
 			inner sep=1pt,
 			font=\scriptsize\bfseries
 		},
 		>=Latex
 		]
 		
 		\flipnode{pkc}{at (0,4)}{PKC}{\pkcflip}
 		\flipnode{pka}{at (0,2.4)}{PKA}{\pkaflip}
 		\flipnode{raf}{at (1.7,2.4)}{Raf}{\rafflip}
 		\flipnode{jnk}{at (-1.7,2.4)}{Jnk}{\jnkflip}
 		\flipnode{p38}{at (-1.7,1.1)}{P38}{\pflip}
 		\flipnode{plcg}{at (-2.4,0.2)}{Plc${}_{\gamma}$}{\plcgflip}
 		\flipnode{pip2}{at (-3.3,-1.0)}{PIP2}{\piptwoflip}
 		\flipnode{pip3}{at (-1.7,-1.0)}{PIP3}{\pipthreeflip}
 		\flipnode{akt}{at (0,-1.0)}{Akt}{\aktflip}
 		\flipnode{erk}{at (1.7,-1.0)}{Erk}{\erkflip}
 		\flipnode{mek}{at (1.7,1.1)}{Mek}{\mekflip}
 		
 		\path[-{Latex[length=2mm,width=1mm]}, line width=0.8pt, tcss-slate3]
 		(raf) edge[] (mek)
 		(mek) edge[] (erk)
 		(plcg) edge[bend right=10] (pip2)
 		(plcg) edge[bend left=10] (pip3)
 		(pip3) edge[] (pip2)
 		(pip3) edge[] (akt)
 		(pip2) edge[bend left=35] (pkc)
 		(pkc) edge[] (raf)
 		(pkc) edge[bend left=12] (mek)
 		(pkc) edge[] (jnk)
 		(pkc) edge[bend right=12] (p38)
 		(pka) edge[] (raf)
 		(pka) edge[] (mek)
 		(pka) edge[bend left=10] (erk)
 		(pka) edge[] (akt)
 		(pka) edge[] (jnk)
 		(pka) edge[] (p38)
 		;
 		
 		\node[\iFourStyle]  (i4) at (1.95,4.00) {4};
 		\node[\iEightStyle] (i8) at (-3.30,-2.35) {8};
 		
 		\iOneArrow
 		\iTwoArrow
 		\iThreeArrow
 		\iFourArrow
 		\iFiveArrow
 		\iSixArrow
 		\iSevenArrow
 		\iEightArrow
 		
 	\end{tikzpicture}
 } 
\newcommand{\ResetSachsInterventions}{%
	\def\iOneStyle{ivwhite}%
	\def\iTwoStyle{ivwhite}%
	\def\iThreeStyle{ivwhite}%
	\def\iFourStyle{ivwhite}%
	\def\iFiveStyle{ivwhite}%
	\def\iSixStyle{ivwhite}%
	\def\iSevenStyle{ivwhite}%
	\def\iEightStyle{ivwhite}%
	\def\iOneArrow{}%
	\def\iTwoArrow{}%
	\def\iThreeArrow{}%
	\def\iFourArrow{}%
	\def\iFiveArrow{}%
	\def\iSixArrow{}%
	\def\iSevenArrow{}%
	\def\iEightArrow{}%
}

\newcommand{\ShowSachsInhibitorFour}{%
	\def\iFourStyle{ivinh}%
	\def\iFourArrow{\path[-{Latex[length=2mm,width=1mm]}, dashed, line width=0.8pt, pr-color1b!70!black] (i4) edge[] (pkc);}%
}

\newcommand{\ShowSachsInhibitorFive}{%
	\def\iFiveStyle{ivinh}%
	\def\iFiveArrow{\path[-{Latex[length=2mm,width=1mm]}, dashed, line width=0.8pt, pr-color1b!70!black] (i5) edge[] (akt);}%
}

\newcommand{\ShowSachsInhibitorSix}{%
	\def\iSixStyle{ivinh}%
	\def\iSixArrow{\path[-{Latex[length=2mm,width=1mm]}, dashed, line width=0.8pt, pr-color1b!70!black] (i6) edge[] (mek);}%
}

\newcommand{\ShowSachsInhibitorSeven}{%
	\def\iSevenStyle{ivinh}%
	\def\iSevenArrow{\path[-{Latex[length=2mm,width=1mm]}, dashed, line width=0.8pt, pr-color1b!70!black] (i7) edge[] (pip3) (i7) edge[bend right=10] (akt);}%
}

\newcommand{\ShowSachsInhibitorEight}{%
	\def\iEightStyle{ivinh}%
	\def\iEightArrow{\path[-{Latex[length=2mm,width=1mm]}, dashed, line width=0.8pt, pr-color1b!70!black] (i8) edge[] (pip2) (i8) edge[bend right=10] (pip3);}%
}
 
\newcommand{\dy}{1}
\newcommand{\xbase}{-2.0}
\newcommand{\xcond}{2.0}

\newcommand{\basenode}[3]{%
	\node[
	circle,
	minimum size=7mm,
	inner sep=1pt,
	fill=fliplow
	] (#1) at #2 {#3};
}

\newcommand{\ordernode}[4]{%
	\pgfmathparse{#4 < 50 ? int(2*#4) : int(2*(#4-50))}
	\let\mixval\pgfmathresult
	\ifnum#4<50
	\node[
	circle,
	minimum size=7mm,
	inner sep=1pt,
	fill=fliplow!\mixval!flipmid
	] (#1) at #2 {#3};
	\else
	\node[
	circle,
	minimum size=7mm,
	inner sep=1pt,
	fill=flipmid!\mixval!fliphigh
	] (#1) at #2 {#3};
	\fi
}

\newcommand{\OrderPanel}[4]{%
	\begin{tikzpicture}[font=\scriptsize]
		\node at (\xbase,0.55) {\texttt{cd3cd28}};
		\node at (\xcond,0.55) {#1};
		\node at (0,1.05) {$\divtop=#2$};
		
		#3
		#4
		
		\path[-{Latex[length=1.6mm,width=0.8mm]}, line width=0.6pt]
		(b1) edge (b2) (b2) edge (b3) (b3) edge (b4) (b4) edge (b5)
		(b5) edge (b6) (b6) edge (b7) (b7) edge (b8) (b8) edge (b9)
		(b9) edge (b10) (b10) edge (b11);
		
		\path[-{Latex[length=1.6mm,width=0.8mm]}, line width=0.6pt]
		(c1) edge (c2) (c2) edge (c3) (c3) edge (c4) (c4) edge (c5)
		(c5) edge (c6) (c6) edge (c7) (c7) edge (c8) (c8) edge (c9)
		(c9) edge (c10) (c10) edge (c11);
	\end{tikzpicture}
}

\newcommand{\BaselineOrder}{%
	\basenode{b1}{(\xbase,0)}{PKA}
	\basenode{b2}{(\xbase,-1*\dy)}{PKC}
	\basenode{b3}{(\xbase,-2*\dy)}{Plc${}_{\gamma}$}
	\basenode{b4}{(\xbase,-3*\dy)}{PIP3}
	\basenode{b5}{(\xbase,-4*\dy)}{Erk}
	\basenode{b6}{(\xbase,-5*\dy)}{Jnk}
	\basenode{b7}{(\xbase,-6*\dy)}{Akt}
	\basenode{b8}{(\xbase,-7*\dy)}{PIP2}
	\basenode{b9}{(\xbase,-8*\dy)}{P38}
	\basenode{b10}{(\xbase,-9*\dy)}{Mek}
	\basenode{b11}{(\xbase,-10*\dy)}{Raf}
}

\newcommand{\OrderUzero}{%
	\ordernode{c1}{(\xcond,0)}{PKC}{10}
	\ordernode{c2}{(\xcond,-1*\dy)}{Erk}{30}
	\ordernode{c3}{(\xcond,-2*\dy)}{PKA}{20}
	\ordernode{c4}{(\xcond,-3*\dy)}{Mek}{60}
	\ordernode{c5}{(\xcond,-4*\dy)}{Raf}{60}
	\ordernode{c6}{(\xcond,-5*\dy)}{Jnk}{40}
	\ordernode{c7}{(\xcond,-6*\dy)}{Plc${}_{\gamma}$}{40}
	\ordernode{c8}{(\xcond,-7*\dy)}{PIP3}{40}
	\ordernode{c9}{(\xcond,-8*\dy)}{P38}{40}
	\ordernode{c10}{(\xcond,-9*\dy)}{PIP2}{40}
	\ordernode{c11}{(\xcond,-10*\dy)}{Akt}{40}
}

\newcommand{\OrderGzero}{%
	\ordernode{c1}{(\xcond,0)}{P38}{80}
	\ordernode{c2}{(\xcond,-1*\dy)}{PIP2}{80}
	\ordernode{c3}{(\xcond,-2*\dy)}{Mek}{70}
	\ordernode{c4}{(\xcond,-3*\dy)}{Jnk}{80}
	\ordernode{c5}{(\xcond,-4*\dy)}{Plc${}_{\gamma}$}{60}
	\ordernode{c6}{(\xcond,-5*\dy)}{Raf}{50}
	\ordernode{c7}{(\xcond,-6*\dy)}{Akt}{80}
	\ordernode{c8}{(\xcond,-7*\dy)}{PIP3}{80}
	\ordernode{c9}{(\xcond,-8*\dy)}{PKA}{80}
	\ordernode{c10}{(\xcond,-9*\dy)}{PKC}{80}
	\ordernode{c11}{(\xcond,-10*\dy)}{Erk}{60}
}

\newcommand{\OrderPsitect}{%
	\ordernode{c1}{(\xcond,0)}{PIP2}{70}
	\ordernode{c2}{(\xcond,-1*\dy)}{Plc${}_{\gamma}$}{30}
	\ordernode{c3}{(\xcond,-2*\dy)}{PKA}{20}
	\ordernode{c4}{(\xcond,-3*\dy)}{PIP3}{20}
	\ordernode{c5}{(\xcond,-4*\dy)}{PKC}{30}
	\ordernode{c6}{(\xcond,-5*\dy)}{Akt}{30}
	\ordernode{c7}{(\xcond,-6*\dy)}{Jnk}{30}
	\ordernode{c8}{(\xcond,-7*\dy)}{P38}{10}
	\ordernode{c9}{(\xcond,-8*\dy)}{Erk}{40}
	\ordernode{c10}{(\xcond,-9*\dy)}{Raf}{10}
	\ordernode{c11}{(\xcond,-10*\dy)}{Mek}{10}
}

\newcommand{\OrderLy}{%
	\ordernode{c1}{(\xcond,0)}{PKA}{0}
	\ordernode{c2}{(\xcond,-1*\dy)}{Plc${}_{\gamma}$}{10}
	\ordernode{c3}{(\xcond,-2*\dy)}{PKC}{10}
	\ordernode{c4}{(\xcond,-3*\dy)}{PIP3}{0}
	\ordernode{c5}{(\xcond,-4*\dy)}{Erk}{0}
	\ordernode{c6}{(\xcond,-5*\dy)}{Jnk}{0}
	\ordernode{c7}{(\xcond,-6*\dy)}{P38}{20}
	\ordernode{c8}{(\xcond,-7*\dy)}{Akt}{10}
	\ordernode{c9}{(\xcond,-8*\dy)}{PIP2}{10}
	\ordernode{c10}{(\xcond,-9*\dy)}{Mek}{0}
	\ordernode{c11}{(\xcond,-10*\dy)}{Raf}{0}
}

\newcommand{\ShowSachsFlipsUzero}{
	\def\pkaflip{20}\def\pkcflip{10}\def\plcgflip{40}\def\pipthreeflip{40}
	\def\erkflip{30}\def\jnkflip{40}\def\aktflip{40}\def\piptwoflip{40}
	\def\pflip{40}\def\mekflip{60}\def\rafflip{60}
}

\newcommand{\ShowSachsFlipsAktInhib}{
	\def\pkaflip{0}\def\pkcflip{20}\def\plcgflip{20}\def\pipthreeflip{30}
	\def\erkflip{30}\def\jnkflip{40}\def\aktflip{20}\def\piptwoflip{40}
	\def\pflip{40}\def\mekflip{0}\def\rafflip{0}
}

\newcommand{\ShowSachsFlipsGzero}{
	\def\pkaflip{80}\def\pkcflip{80}\def\plcgflip{60}\def\pipthreeflip{80}
	\def\erkflip{60}\def\jnkflip{80}\def\aktflip{80}\def\piptwoflip{80}
	\def\pflip{80}\def\mekflip{70}\def\rafflip{50}
}

\newcommand{\ShowSachsFlipsLy}{
	\def\pkaflip{0}\def\pkcflip{10}\def\plcgflip{10}\def\pipthreeflip{0}
	\def\erkflip{0}\def\jnkflip{0}\def\aktflip{10}\def\piptwoflip{10}
	\def\pflip{20}\def\mekflip{0}\def\rafflip{0}
}

\newcommand{\ShowSachsFlipsPsitect}{
	\def\pkaflip{20}\def\pkcflip{30}\def\plcgflip{30}\def\pipthreeflip{20}
	\def\erkflip{40}\def\jnkflip{30}\def\aktflip{30}\def\piptwoflip{70}
	\def\pflip{10}\def\mekflip{10}\def\rafflip{10}
}

 \colorlet{rankneutral}{fliplow}
 \colorlet{rankposlow}{goldenrod!20}
 \colorlet{rankposhigh}{goldenrod}
 \colorlet{rankneglow}{pr-color1b!15}
 \colorlet{rankneghigh}{pr-color1b!80!black}
 
 \newcommand{\ranknode}[4]{%
 	\pgfmathtruncatemacro{\rankmix}{min(100,abs(#4)*12)}
 	\ifdim #4pt > 0pt
 	\node[
 	circle,
 	minimum size=7mm,
 	inner sep=1pt,
 	fill=rankposlow!\rankmix!rankposhigh
 	] (#1) #2 {#3};
 	\else
 	\ifdim #4pt < 0pt
 	\node[
 	circle,
 	minimum size=7mm,
 	inner sep=1pt,
 	fill=rankneglow!\rankmix!rankneghigh
 	] (#1) #2 {#3};
 	\else
 	\node[
 	circle,
 	minimum size=7mm,
 	inner sep=1pt,
 	fill=rankneutral
 	] (#1) #2 {#3};
 	\fi
 	\fi
 }

\newcommand{\SachsRankGraph}[2]{%
	\begin{tikzpicture}[
		font=\scriptsize,
		prot/.style={
			draw=tcss-slate3,
			circle,
			minimum size=8mm,
			inner sep=1pt
		},
		ivwhite/.style={
			draw=white,
			white,
			fill=white,
			circle,
			dashed,
			thick,
			minimum size=7mm,
			inner sep=1pt,
			font=\scriptsize\bfseries
		},
		ivact/.style={
			draw=pr-color1a!60!black,
			fill=pr-color1a!10,
			circle,
			dashed,
			thick,
			minimum size=7mm,
			inner sep=1pt,
			font=\scriptsize\bfseries
		},
		ivinh/.style={
			draw=pr-color1b!70!black,
			fill=pr-color1b!10,
			circle,
			dashed,
			thick,
			minimum size=7mm,
			inner sep=1pt,
			font=\scriptsize\bfseries
		},
		>=Latex
		]
		
		\ranknode{pkc}{at (0,4)}{PKC}{\pkcrank}
		\ranknode{pka}{at (0,2.4)}{PKA}{\pkarank}
		\ranknode{raf}{at (1.7,2.4)}{Raf}{\rafrank}
		\ranknode{jnk}{at (-1.7,2.4)}{Jnk}{\jnkrank}
		\ranknode{p38}{at (-1.7,1.1)}{P38}{\prank}
		\ranknode{plcg}{at (-2.4,0.2)}{Plc${}_{\gamma}$}{\plcgrank}
		\ranknode{pip2}{at (-3.3,-1.0)}{PIP2}{\piptworank}
		\ranknode{pip3}{at (-1.7,-1.0)}{PIP3}{\pipthreerank}
		\ranknode{akt}{at (0,-1.0)}{Akt}{\aktrank}
		\ranknode{erk}{at (1.7,-1.0)}{Erk}{\erkrank}
		\ranknode{mek}{at (1.7,1.1)}{Mek}{\mekrank}
		
		\path[-{Latex[length=2mm,width=1mm]}, line width=0.8pt, tcss-slate3]
		(raf) edge[] (mek)
		(mek) edge[] (erk)
		(plcg) edge[bend right=10] (pip2)
		(plcg) edge[bend left=10] (pip3)
		(pip3) edge[] (pip2)
		(pip3) edge[] (akt)
		(pip2) edge[bend left=35] (pkc)
		(pkc) edge[] (raf)
		(pkc) edge[bend left=12] (mek)
		(pkc) edge[] (jnk)
		(pkc) edge[bend right=12] (p38)
		(pka) edge[] (raf)
		(pka) edge[] (mek)
		(pka) edge[bend left=10] (erk)
		(pka) edge[] (akt)
		(pka) edge[] (jnk)
		(pka) edge[] (p38)
		;
		
		\node[\iOneStyle]   (i1) at (-1.45,5.45) {1};
		\node[\iTwoStyle]   (i2) at (1.45,5.45) {2};
		\node[\iThreeStyle] (i3) at (-1.55,4.15) {3};
		\node[\iFourStyle]  (i4) at (1.95,4.00) {4};
		\node[\iFiveStyle]  (i5) at (0,-2.35) {5};
		\node[\iSixStyle]   (i6) at (3.25,1.1) {6};
		\node[\iSevenStyle] (i7) at (-1.7,-2.35) {7};
		\node[\iEightStyle] (i8) at (-3.30,-2.35) {8};
		
		\iOneArrow
		\iTwoArrow
		\iThreeArrow
		\iFourArrow
		\iFiveArrow
		\iSixArrow
		\iSevenArrow
		\iEightArrow
		
	\end{tikzpicture}
}

\title{Learning Causal Orderings for \\In-Context Tabular Prediction }

\author{%
  Sascha Xu\thanks{Equal contribution.}\\
  CISPA Helmholtz Center\\
  \texttt{sascha.xu@cispa.de} \\
  \And
  Sarah Mameche\footnotemark[1]\\
  CISPA Helmholtz Center\\
  \texttt{sarah.mameche@cispa.de} \\
  \And
  Jilles Vreeken\\
  CISPA Helmholtz Center\\
  \texttt{jv@cispa.de}
}

\begin{document}

\maketitle

\begin{abstract}
In-context learning for tabular data sets  strong predictive standards in observational settings; it however  primarily relies on correlational structure, which becomes unreliable under distribution shift or intervention. While established methods to discover causal structure exist, they are often focused on structure  identifiability and decoupled from the predictive architectures that could benefit from them.  
  
To bridge these perspectives, we study how to simultaneously infer and enforce causal structure in the form of topological variable orderings into tabular prediction. Unlike standard architectures, our model \ourmethod uses causal order-constrained attention, basing predictions only on features that precede a target under a learned causal order. Similar to causal discovery methods, \ourmethod learns  the optimal variable ordering in an unsupervised manner through a likelihood-based objective. We justify this choice under standard functional model classes and also study how sample missingness, a common challenge in tabular data, interacts with causal direction identification. 
Empirically, we  confirm that \ourmethod recovers accurate variable orderings while addressing prediction and imputation tasks, as well as gives insight into real-world biological data under intervention.

\end{abstract}

\section{Introduction} 
In-context learning architectures such as 
Tabular foundation models (TFMs) achieve strong performance in predictive learning from tabular data through large-scale pre-training   
\citep{muller:22:transformers,hollmann:23:tabpfn}.      
However, these models are primarily associative  
and do not explicitly account for causal directionality \citep{pearl:09:causality} among features, such that their predictions could rely on non-causal associations sensitive to distribution shifts. 
We illustrate this effect in Figure~\ref{fig:three-variable-intervention-preds} in a three-variable chain $X \to Y \to Z$,  comparing an observational regime (green) to an intervention on the mechanism generating $Z$ (orange). This affects the predictive performance of a state-of-the art TFM  (Figure~\ref{fig:three-variable-intervention-preds}a), suggesting that it relies on the  non-causal statistical association between $Z$   and $Y$.  In contrast,   predictions with our proposed architecture   remain robust (Figure~\ref{fig:three-variable-intervention-preds}b) as the model relies only on the direct causal relationship between $X$   and $Y$.  
 
Identifying  such directional relationships is the primary aim of causal discovery \citep{pearl:09:causality}, usually with focus on recovering the underlying directed acyclic graph (DAG) from observational data. However, classical methods \citep{spirtes2001causation} are typically dataset-specific and detached from large-scale predictive architectures. Recent amortized approaches to causal discovery \citep{lorch:22:avici,ke2022learning} apply an in-context learning paradigm to causal discovery, relying on supervised training over large collections of known causal graphs.  
  However, these works  focus on a different question of whether causal structure recovery can be improved by supervised learning over large synthetic samples \citep[cf.][]{montagna2025demystifying}, whereas the question of how to systematically use causal structure in predictive architectures   is  relatively underexplored. 
 
In this work, we propose to connect the research perspectives behind predictive tabular models on the one hand, and causal structure discovery on the other hand, within a transformer-based framework.   Unlike standard architectures, our model  \ourmethod uses causal order-constrained attention, modeling the joint distribution as an ordered factorization under a learned causal ordering.  In contrast to classical causal discovery, \ourmethod infers this order efficiently within a single forward pass; in addition, most existing causal discovery methods address a fully observed regime, while we allow for missing samples with direct mechanisms for imputation. Finally, different from amortized causal discovery, we maintain an unsupervised approach without requiring ground-truth causal orders during training. 

Given that  recovering causal structure from purely observational data is impossible without further assumptions, we design our architecture around a likelihood-based objective that we justify in  Additive Noise Model (ANM) regimes under standard assumptions \citep{buhlmann:14:cam}.  
We study how the prediction of arbitrary missing values can be used to learn causal orderings, and show that the presence of missing samples can actually facilitate causal direction identification (Section \ref{sec:causality}).
Experiments across different classes of generating processes (Section \ref{sec:experiments}) confirm that  \ourmethod reliably infers accurate causal orderings, while maintaining  competitive predictive performance downstream tasks such as  missing  value imputation.

 \begin{figure}[t]
  \centering
  \pgfplotstableread[col sep=comma]{expres/three_variable_intervention_predictions.csv}\datatable
  \pgfplotstablesort[sort key=x]{\datatableSorted}{\datatable}
    \pgfplotstableread[col sep=comma]{expres/three_variable_intervention_predictions_intervened.csv}\datatableintervened
    \pgfplotstableread[col sep=comma]{expres/three_variable_intervention_predictions_not_intervened.csv}\datatablenotintervened
  \begin{subfigure}{0.48\linewidth}
    \centering
    \begin{tikzpicture}
      \begin{axis}[
        width=.8\linewidth,
        height=3.5cm,
        pretty line,
        pretty labelshift,
        xlabel={$X$},
        ylabel={$Y$},
        legend style={inner sep=1pt,font=\scriptsize,at={(1.5,1.5)},anchor=north,legend columns=3},
        legend image post style={opacity=1},
      ]

        \addplot+[only marks,opacity=1, opacity=0.3, mark size=0.5pt, color=dollarbill]
          table[
            x=x,
            y=y_pred_tabpfn,
          ]{\datatablenotintervened};
        \addlegendentry{$\hat{Y}$ (non-intervened)}

        \addplot+[only marks,opacity=1,opacity=0.3, mark size=0.5pt, color=orange]
          table[
            x=x,
            y=y_pred_tabpfn,
          ]{\datatableintervened};
        \addlegendentry{$\hat{Y}$ (intervened)}

        \addplot+[no marks, ultra thick, color=black] table[x=x, y=y_noisefree]{\datatableSorted};
        \addlegendentry{$f(X)$}

      \end{axis}
    \end{tikzpicture}
    \subcaption{Prediction with \tabpfn.}
  \end{subfigure}
  \hfill
  \begin{subfigure}{0.48\linewidth}
    \centering
    \begin{tikzpicture}
      \begin{axis}[
        width=.8\linewidth,
        height=3.5cm,
        pretty line,
        pretty labelshift,
        xlabel={$X$},
      ]

        \addplot+[only marks,opacity=0.3, mark size=0.5pt, color=dollarbill]
          table[
            x=x,
            y=y_pred_cfm,
          ]{\datatablenotintervened};

        \addplot+[only marks,opacity=0.3, mark size=0.5pt, color=orange]
          table[
            x=x,
            y=y_pred_cfm,
          ]{\datatableintervened};

          \addplot+[no marks, ultra thick, color=black] table[x=x, y=y_noisefree]{\datatableSorted};

      \end{axis}
    \end{tikzpicture}
    \subcaption{Prediction with \ourmethod.}
  \end{subfigure}

  \caption{\textbf{Prediction under Intervention with and without Order Constrained Attention}. For prediction of a mediator $Y$ in a chain $X \to Y \to Z$, we measure
  test error without resp.~with intervention on $Y \to Z$. The generating mechanism $f(X)$ is shown in black. \tabpfn (a)  accurately models $X \to Y$ when no intervention is present (\textcolor{dollarbill}{green}), but fails under intervention (\textcolor{orange}{orange}). 
  \ourmethod (b) remains accurate in both settings by learning and leveraging the causal order.}
  \label{fig:three-variable-intervention-preds}
\end{figure}

\section{Causal Order-Constrained Prediction}
We begin by introducing the problem setting, then explain the   features of our architecture. 

Let $\featset = (\featvar_1,\dots,\featvar_{\nfeat})$ denote a collection of 
$\nfeat$ real-valued random variables drawn from an unknown joint distribution 
$P(\featset)$.  We observe a dataset 
$\dataset = \{\featvecr\}_{r=1}^{\nsamples}$ over $\nsamples$ samples, where each row
$\featvecr = (\cell{1}{r},\dots,\cell{\nfeat}{r}) \in \R^{\nfeat}$ 
is an independent realization of $\featset$, and where some entries $\cell{i}{r}$ are missing, indicated by a binary mask $\missingmask  = \{\missingij\}_{r=1}^{\nsamples}$ with $\missingij \in \{0,1\}$. Our objective is to learn the structured conditional dependencies between variables in $\featset$ to enable accurate and robust cell-wise predictions.

We base the architecture design on  \textbf{tabular foundation models} (TFMs) \citep{hollmann:23:tabpfn}, which model the posterior predictive distribution of a target variable $\cellij$ in row $r$ 
as $p_{\theta}(\feat_i \mid \featvecr_{i^*}, \dataset)$, where $\featvecr_{i^*}$ denotes all entries in row $r$ except $\cell{i}{r}$.

This approach however allows the use of all available features as predictors for $\cell{i}{r}$ without accounting for causal directionalities.
To address this, we propose modeling the joint distribution 
of a row $\featvecr$ using an \textbf{ordered factorization}  of the form
\begin{equation}
\label{eq:ordered-factorization}
p_\theta(\featvec^{(r)} \mid \dataset, \order)
=
\prod_{k=1}^{\nfeat}
p_\theta\!\left(
\cell{\order(k)}{r}\mid \cell{\order(1:k-1)}{r}\,, \dataset
\right)\;.
\end{equation}

\begin{figure*}[!t]
    \centering 
\begin{tikzpicture}[ 
    font=\small,
    scale=0.85,
    transform shape,
    >=Latex,
    rounded corners=2pt,
    matrixstyle/.style={
        matrix of nodes,
        nodes in empty cells,
        nodes={draw, minimum width=0.5cm, minimum height=0.5cm, anchor=center},
        row sep=-\pgflinewidth,
        column sep=-\pgflinewidth
    },
    vecstyle/.style={
        matrix of nodes,
        nodes in empty cells,
        nodes={draw, minimum width=0.5cm, minimum height=0.5cm, anchor=center},
        row sep=0pt,
        column sep=-\pgflinewidth
    },
    attnstyle/.style={
        matrix of nodes,
        font=\scriptsize,
        nodes in empty cells,
        nodes={draw, minimum width=0.5cm, minimum height=0.5cm, anchor=center},
        row sep=-\pgflinewidth,
        column sep=-\pgflinewidth
    },
    every node/.style={inner sep=1pt}
] 
\matrix (D) [matrixstyle,
    column 1/.style={nodes={fill=arch-blue!10}},
    column 2/.style={nodes={fill=arch-green!10}},
    column 3/.style={nodes={fill=arch-orange!15}}
] {
      &       &       \\
    ? &       &       \\
      & ?     &       \\
      & ?      & ?     \\
       &       &       \\
};

\node[above=4pt] at (D-1-1.north) {$X_1$};
\node[above=4pt] at (D-1-2.north) {$X_2$};
\node[above=4pt] at (D-1-3.north) {$X_3$};

\node[above=15pt of D.north] {Original Dataset $\dataset$};



\matrix (score) [attnstyle, below=0.4cm of D] {
    0.8 & 1.3 & -0.5 \\
};
\node[below=2pt] at (score-1-1.south) {$s_1$};
\node[below=2pt] at (score-1-2.south) {$s_2$};
\node[below=2pt] at (score-1-3.south) {$s_3$};

\draw[
    decorate,
    decoration={brace,mirror,amplitude=6pt},
    thick
] 
    ($(D.south west)+(0,-0.15)$) -- ($(D.south east)+(0,-0.15)$);


\matrix (Ds) [matrixstyle,
    right=6cm of D,
    column 1/.style={nodes={fill=arch-orange!15}}, 
    column 2/.style={nodes={fill=arch-blue!10}},   
    column 3/.style={nodes={fill=arch-green!10}}   
] {
      &       &       \\
     &  ?      &       \\
      &      &  ?     \\
      ? &       & ?     \\
       &       &       \\
};

\node[above=4pt] at (Ds-1-1.north) {$X_3$};
\node[above=4pt] at (Ds-1-2.north) {$X_1$};
\node[above=4pt] at (Ds-1-3.north) {$X_2$};

\node[above=15pt of Ds.north] {Sorted by $s_i$};

\matrix (Dp) [matrixstyle,
    right=2cm of Ds,
    column 1/.style={nodes={fill=arch-orange!15}}, 
    column 2/.style={nodes={fill=arch-blue!10}},   
    column 3/.style={nodes={fill=arch-green!10}}   
] {
      &       &       \\
     &  ?      &       \\
      &      &  ?     \\
      ? &       & ?     \\
       &       &       \\
};
\node[above=15pt of Dp.north,anchor=center, align=center] {Hidden state\\for prediction};

\coordinate (pcol0) at ($(Dp.south west)!0.0!(Dp.south east)$);
\coordinate (pcol1) at ($(Dp.south west)!0.333!(Dp.south east)$);
\coordinate (pcol2) at ($(Dp.south west)!0.666!(Dp.south east)$);
\coordinate (pcol3) at ($(Dp.south west)!1.0!(Dp.south east)$);
\coordinate (pc1) at ($(pcol0)!0.5!(pcol1)$);
\coordinate (pc2) at ($(pcol1)!0.5!(pcol2)$);
\coordinate (pc3) at ($(pcol2)!0.5!(pcol3)$);

\node (sig1) at ($(pc1)+(0,-0.7cm)$) {$\hat{\sigma}^2_3$};
\node (sig2) at ($(pc2)+(0,-0.7cm)$) {$\hat{\sigma}^2_1$};
\node (sig3) at ($(pc3)+(0,-0.7cm)$) {$\hat{\sigma}^2_2$};
\draw[->] (pc1) -- (sig1.north);
\draw[->] (pc2) -- (sig2.north);
\draw[->] (pc3) -- (sig3.north);
\node[anchor=west, align=left] at ($(pc1)+(-0.5cm,-0.2cm)$) {$f_{\sigma}$};

\coordinate (predcell) at (Dp-4-3.east);
\coordinate (missxone) at (Dp-4-1.south);
\coordinate (sigmacol) at ($(Dp.south east)+(0.5cm,0)$);
\matrix (predvec) [matrixstyle, right=1.5cm of Dp,anchor=north, yshift=-0.23cm, nodes={minimum width=1.5cm, minimum height=0.51cm}] {
    $\hat{\mu}_2^{(4)}$ \\
    $\hat{\sigma}^2_2 + \varinc_{2,3}$ \\
};
\node[above=10pt,align=center] at (predvec-1-1.north) {Decode $\mu$ and $\sigma^2$\\for missing $\cell{2}{4}$};
\draw[->] (predcell.east) -- node[above, midway] {$f_{\mu}$} (predvec-1-1.west);
 \draw[->] (missxone.south) to[out=-45, in=220, looseness=0.8]
    ($(predvec-2-1.south)+(0.25cm,0)$); 
\node[anchor=north, align=center] at ($(predvec-2-1.south)+(0.25cm,-0.4cm)$)
    {Increase $\sigma^2$ \\for missing \\predecessors};

\matrix (A) [attnstyle, at={($(D.north)+(3cm,0)$)}, anchor=north] {
    1 & 0 & 1 \\
    1 & 1 & 1 \\
    0 & 0 & 1 \\
};
\node[anchor=south,align=left] (Alabel) at ($ (A.north) + (0.3,0.3cm) $) {Forward:\\$\attention_{ij}= \mathbbm{1}[s_i \ge s_j]$};

\matrix (Ab) [attnstyle, below=1.0cm of A] {
    0.50 & 0.38 & 0.79 \\
    0.62 & 0.50 & 0.86 \\
    0.21 & 0.14 & 0.50 \\
};
\node[above=3pt,align=left] (Ablabel) at ($ (Ab.north) + (0.3,0cm) $) {Backward:\\$\hat{\attention}_{ij}=\sigma(s_i - s_j)$};

\draw[->, rounded corners=6pt]
(score.east) -- ++(0.6cm,0) |- (Alabel.west); 
\draw[->, rounded corners=6pt]
(score.east) -- ++(0.6cm,0) |- (Ablabel.west);

\coordinate (sep) at ($(A.east)!0.28!(Ds.west)$);
\draw[black, opacity=.4, line width=.2pt] ($(sep)+(0,2.2cm)$) -- ($(sep)+(0,-3.6cm)$);

\coordinate (c1) at ($(Ds.south west)!0.166!(Ds.south east)$); 
\coordinate (c2) at ($(Ds.south west)!0.5!(Ds.south east)$);   
\coordinate (c3) at ($(Ds.south west)!0.833!(Ds.south east)$); 

\foreach \i/\j in {1/2,1/3,1/4,1/5,2/3,2/4,2/5,3/4,3/5,4/5} {
    \draw[<->, draw=arch-blue!60, opacity=0.5, line cap=round]
        (Ds-\i-1.west) to[out=180, in=180, looseness=1.6] (Ds-\j-1.west); 
}
\node[anchor=west, align=left] at ($(Ds.west)+(-2cm,0.cm)$) {\textbf{\textcolor{arch-blue}{Row attention}}\\unrestricted};

\node[anchor=north,align=left] at ($(Ds.south)+(0,-0.65cm)$) {\textbf{\textcolor{arch-green!100}{Col attention}}\\following order\\(right $\to$ left)};
\draw[->, line cap=round, draw=arch-green]
    (Ds-5-3.south) to[out=-90, in=-90, looseness=2] (Ds-5-2.south);
\draw[->, line cap=round, draw=arch-green]
    (Ds-5-3.south) to[out=-90, in=-90, looseness=2] (Ds-5-1.south);
\draw[->, line cap=round, draw=arch-green]
    (Ds-5-2.south) to[out=-90, in=-90, looseness=2] (Ds-5-1.south);
\draw[->, rounded corners=6pt]
    (Ds.east) to (Dp.west);
    
\node[align=center] (reptext)
    at ($(Ds.east)!0.5!(Dp.west)$) { Repeat\\$m$ times};

\node[above=40pt, font=\small, font=\bfseries] at ($(D.north)!0.5!(A.north)$) {Order Inference};
\node[above=40pt of Ds.north, font=\small, font=\bfseries] at ($(Ds.north)!0.5!(Dp.north)$) {Order-Constrained Prediction};

\end{tikzpicture}   
\caption{\textbf{Overview over \ourmethod.} For a  dataset $\dataset$ we map each column $i$ to an order score $s_i$. From these scores, we construct a hard/soft attention mask to constrain attention according to  the learned order (left). 
To predict missing entries in $\dataset$, we alternate row-wise attention (unrestricted) and column-wise causal attention (order-constrained).
From the resulting cell embeddings, we decode the conditional mean and a variance term accounting for missing potential causes (right).
}
\label{fig:dataset-to-order}
\end{figure*}
Under this factorization, the prediction for an entry $\cell{\order(k)}{r}$ is conditioned strictly on the variables preceding it in the order $\pi \in S_\nfeat$, where $S_\nfeat$ is the set of all possible variable permutations and  $\order(k)$ denotes the index of the variable at position $k$. 
This induces an asymmetric dependency structure, where each variable is predicted only from its predecessors.
If the learned order aligns with the underlying causal structure, the model is prevented from exploiting anti-causal or spurious correlations that do not generalize under intervention.

We are specifically interested in \textbf{causal orders} where $\pi$ is compatible with the underlying cause-effect relationships of the data-generating process, i.e., corresponds to a  topological sorting of an underlying Directed Acyclic Graph (DAG) \citep{pearl:09:causality}. However, ground-truth causal orders  $\order$ are often unknown or only partially observed in real-world settings \citep[e.g.,][]{sachs:05:sachs}. Therefore, we design the architecture  to learn an optimal ordering $\hat{\pi}$ end-to-end, without requiring ground-truth causal labels during training. We explain this approach in the following, then explain its correspondence with inferring causal orders in Section \ref{sec:causality}.

\subsection{\ourmethod Architecture}
\label{sec:architecture}
\ourmethod is a dual-transformer architecture that infers causal orders and performs order-constrained predictions in a single end-to-end pass. The model operates through two primary stages.
Given a table $\dataset \in \R^{\nsamples \times \nfeat}$ with missing entries indicated by $\missingmask \in \{0,1\}^{\nsamples \times \nfeat}$, each cell $\cellij$ and its missingness indicator $\missingij$ are embedded through a learned linear map
\begin{equation}
    \mathbf{e}_{i}^{(r)}
    = W_{\mathrm{emb}}
      \begin{bmatrix}
      \cellij \\ \missingij
      \end{bmatrix}
    + \mathbf{b}_{\mathrm{emb}}\;,
    \qquad
    W \in \R^{h \times 2}\;.
\end{equation}
These embeddings are processed by alternating row-wise and
column-wise self-attention layers in two specialized transformer branches:
an unconstrained branch that induces a variable ordering $\hatorder$ via scalar scores $\orderscores$ (\emph{Order Inference}), 
and a masked branch that models the conditional distribution of each cell, using a mask constructed from the learned scores $\orderscores$ (\emph{Order-Constrained Prediction}). 
We   provide a high-level overview of \ourmethod in Fig.~\ref{fig:dataset-to-order} 
 and explain its components in detail below.

\paragraph{Order Inference.}
To infer the ordering $\hatorder$, we apply a standard multi-head self-attention transformer \citep{vaswani2017attention} to the initial cell embeddings, alternating between 
row- and column-wise attention for $m$ steps. This results in a hidden state $\Hord \in \R^{\nsamples \times \nfeat \times h}$.

We model each variable's $\featvar_i$ position in the order via a scalar score $\orderscore_i \in \R$, where lower scores indicate an earlier position. 
We decode these scores from the hidden state $\Hord$ by applying a two-layer MLP $f_{\text{ord}}:\R^h\to \R$ to each cell representation, and averaging across rows as per
\begin{equation}
    \orderscore_i(\dataset) = \frac{1}{\nsamples} \sum_{r=1}^{\nsamples} f_{\text{ord}}\left(\Hord_{r,i}\right)\;.
\end{equation}
To enforce the order imposed by the scores $\mathbf{s}$ 
in the \emph{Order-Constrained Prediction} branch, we construct a causal attention mask $\attention^\mathbf{s}$, where $\attention^\mathbf{s}_{i,j} = \mathbbm{1}(s_j \leq s_i)$, meaning that column $i$ may only attend to columns with lower score.
To maintain differentiability, we use a straight-through relaxation in the backward pass as 
\begin{equation}
    \hat{\attention}^\mathbf{s}_{i,j}
    = \operatorname{sigmoid}\!\left(\frac{s_i - s_j}{\tau}\right)\;,
\end{equation}
where the temperature $\tau$ controls the sharpness of the ranking.

The \emph{Order Inference} transformer thus produces a differentiable ordering that controls the information flow in the \emph{Order-Constrained Prediction} branch. 
Since both components are trained jointly, the model is encouraged to place variables earlier in the order when they improve prediction of downstream variables. 
For causal mechanisms with lower prediction error in the causal direction \citep{blobaum:18:reci}, 
incorrect orderings incur systematically higher error and are therefore suboptimal.

\paragraph{Order-Constrained Prediction.}
We now model the structured conditional dependencies using the causal mask $\attention^{\mathbf{s}}$.
We apply a second transformer to the initial cell embeddings, alternating row- and column-wise attention for $m$ steps. 
While row-wise attention remains unconstrained to allow for in-context information sharing across samples, column-wise attention is strictly restricted by $\attention^{\mathbf{s}}$,
ensuring that the hidden state in column $i$ depends only on its predecessors in the learned order $\hatorder$.

This yields predictive hidden states $\Hpred \in \R^{\nsamples \times \nfeat \times h}$ that parameterize the 
conditional factorization of Eq.~\eqref{eq:ordered-factorization}.
We model each variable's conditional distribution using \textbf{additive noise models} (ANMs)
\begin{equation}\label{eq:anm}
    X_i = f(X_{\pa(i)}) + N_i,\; N_i \indep X_{\pa(i)}\;,
\end{equation}
with Gaussian noise $N_i \sim \mathcal{N}(0,\sigma_i^2)$. 
To that end, we decode the conditional mean and the variance, averaged per variable, via two MLPs
$f_{\mu}$ and $f_{\sigma}$ as
\begin{equation}
    \hat{\mu}_i^{(r)} = f_{\mu}\left(\Hpred_{r,i}\right),\quad
    \hat{\sigma}_i^2 = \frac{1}{\nsamples} \sum_{r=1}^{\nsamples} f_{\sigma}\left(\Hpred_{r,i}\right)\;.
\end{equation}
\setlength{\intextsep}{4pt}
 \begin{wrapfigure}{r}{0.45\textwidth} 
\centering
\begin{tikzpicture}[
    font=\small,
    >=Latex,
    matrixstyle/.style={
        matrix of nodes,
        nodes in empty cells,
        nodes={
            draw,
            minimum width=1.3cm,
            minimum height=0.75cm,
            anchor=center
        },
        row sep=-\pgflinewidth,
        column sep=-\pgflinewidth
    }
]


\matrix (tab) [
    matrixstyle,
    below=0.4cm of {(0,1.8)},
    column 1/.style={nodes={fill=arch-blue!10,minimum width=1.4cm,}},     
    column 2/.style={nodes={fill=arch-green!10,minimum width=1.4cm,}},    
    column 3/.style={nodes={minimum width=3.5cm, align=left}} 
] {
   {$X_1$} 
    & {$X_2$} 
    &
       {$X_3 = X_1 + (X_2)^2 + N$} \\
    {1}   & {2}  &
        {$\hat{\mu}_3^{(1)} = 5,\ \hat{\sigma}_3^{2(1)} = 1$} \\
    {0}   & {?}  &
        {$\hat{\mu}_3^{(2)} = 0,\ \hat{\sigma}_3^{2(2)} = 1 + \textcolor{ufogreen}{2}$} \\
    {?}   & {-1} &
        {$\hat{\mu}_3^{(3)} = 1,\ \hat{\sigma}_3^{2(3)} = 1 + \textcolor{blue}{1}$} \\
};
\end{tikzpicture}
\caption{\textbf{Effect of Missingness.} In an additive noise model, $X_3 = X_1 + (X_2)^2 + N$ where $ N, X_1, X_2 \sim \mathcal{N}(0,1)$, when either parent of $X_3$ is missing, the model increments the variance $\hat{\sigma}^2$   (Eq. \eqref{eq:variance-increment}) by a learned amount (blue for $X_1$, green for $X_2$) that reflects the effect of missingness of causal parents. 
}
\label{fig:anm-variance-increment}

\end{wrapfigure}
\paragraph{Prediction under Partial Observation.}
Since the model predicts potentially multiple missing entries per row, missing causal parents affect the estimation of the functional relationships in Eq.~\eqref{eq:anm}, leading to increased uncertainty when relevant inputs are unobserved. 
We capture this effect via a learnable variance increment $\varincij \in \R^+$, decoded from $\Hpred$, such that the point-wise variance for a cell $\cellij$ is given by
\begin{equation}
    \label{eq:variance-increment}
    \hat{\sigma}_i^{2 (r)} = \hat{\sigma}_i^2 + \sum_{j \in [\nfeat]} \varincij \cdot \mathbbm{1}(s_j < s_i)\cdot \missing_{j}^{(r)}\;.
\end{equation}
Intuitively, the model increases predictive uncertainty when variables that precede $X_i$ in the learned order are missing. 
For example, in a collider structure $X_1 \rightarrow X_3 \leftarrow X_2$, missing either parent increases the uncertainty in predicting $X_3$ (Fig. \ref{fig:anm-variance-increment}).
Moreover, we show that this mechanism amplifies the likelihood gap between correct and incorrect orderings, thus facilitating the learning of accurate causal orders (Section \ref{sec:missingness-causality}).

\paragraph{Training.}
To train our model, we optimize all components jointly by maximizing the
log-likelihood of masked entries under the ordered conditional model.
We assume access to a distribution over datasets $\dataset \sim \mathcal{P}$, from which training instances are sampled. 
For each dataset, we generate a prediction task by masking entries completely at random with probability $q = 20\%$.
Given a masked dataset $(\dataset, \missingmask)$, we first infer an attention mask $\attention^\mathbf{s}$ via the order induction
module.  Conditioned on this mask, the predictive transformer produces
cell-wise mean 
$\hat{\mu}_i^{(r)}$ and variance estimates $\hat{\sigma}_i^{2(r)}$.
The training objective is then to maximize the average log-likelihood $\mathcal{L}$ of the
masked entries,
\begin{equation} 
    \mathcal{L}
    = \frac{1}{|\missingmask|}
      \sum_{\missing_{i}^{(r)} = 1}
      \log \left[
      p_{\mathcal{N}}\!\Bigl(
          \feat_{i}^{*(r)} \,\big|\,
          \hat{\mu}_i^{(r)},
          \hat{\sigma}_i^{2(r)}
      \Bigr)\right].
      \label{eq:log-likelihood}
\end{equation}

Maximizing Eq.~\eqref{eq:log-likelihood} couples order learning and prediction: orderings that better explain masked entries receive higher likelihood.
Together, the architecture and objective of \ourmethod provide an in-context learning framework for inferring variable orderings and structured conditional dependencies.

We describe our exact implementation including relevant hyperparameter choices in Appendix~\ref{app:model-details}.
\section{Causal Order Inference}
\label{sec:causality} 
In this section, we provide a theoretical perspective on \ourmethod{} and examine under which conditions its architecture and likelihood-based training objective are aligned with recovering causal structure. 
In particular, we focus on learning variable orderings without requiring ground-truth causal graphs during training.
 
As is standard, we  assume the data to be generated by a Structural Causal Model (SCM)
\citep{pearl:09:causality} with structural equations
    $X_i = f_i(\pa_i, N_i)$,
where $\pa_i$ denotes the set of direct causes (parents) of $X_i$,
and $N_i$ is an exogenous noise variable.
The causal structure of the entire system is captured by a directed acyclic graph (DAG). 
A \textbf{topological ordering} of a DAG is any
order in which all causal parents precede their children; for such an ordering
$\pi$, the joint density factorizes as
\begin{equation}\label{eq:causal-factorization}
    p(\featvec) = \prod_{i=1}^{\nfeat} p\!\left(\feat_{\order(i)} \mid x_{\pi(1:i-1)}\right) = \prod_{i=1}^\nfeat p\!\left(x_i \mid x_{\pa_i}\right)\;.
\end{equation}
Hence we refer to $\order\in S_\nfeat$  as a causal order for $X$ if it corresponds with a topological ordering of the underlying causal DAG. We make standard assumptions of the causal Markov Condition, faithfulness, and causal sufficiency, listed in detail in Appendix \ref{app:background}.

\paragraph{Order Inference via Likelihood-Based Objectives.} Identifying the causal graph and thereby a topological ordering is feasible for certain classes of functional mechanisms \citep{shimizu:06:lingam,hoyer:08:anm,zhang:09:pnl}.
As our learning objective is based on minimizing conditional error variance, we focus on \textbf{causal
	additive models} (CAM) \citep{buhlmann:14:cam}, 
\begin{equation}
	X_i = \sum_{j \in \pa_i} f_{ij}(X_j) + N_i,
	\label{eq:cam-model-a}
\end{equation}
with smooth functions $f_{ij}$ and independent noise $N_i \indep X_i$. 
For this class, the likelihood-optimal order is consistent with the topological orders of the true DAG, such that scoring criteria based on residual variances consistently recover valid causal orders  \citep{buhlmann:14:cam}. 
As \ourmethod jointly predicts a  variable order $\hat{\pi}$ as well as models conditional expectations 
under  the order-constrained SCM (Eq. \ref{eq:anm}) via likelihood maximization, this steers
training toward orderings that are consistent with the underlying causal structure, without requiring ground-truth graphs.
\paragraph{Order Inference  under Partial Observation.}
\label{sec:missingness-causality}
In the presence of missing values, \ourmethod includes an explicit mechanism to capture the effect on the predictive uncertainty in the additive noise model in Eq.~\eqref{eq:anm}  (see Figure~\ref{fig:anm-variance-increment}).
We now show that in certain cases, we can leverage an asymmetry in the effect of missingness to further increase  likelihood terms in the causal direction. 

To illustrate, consider the bivariate cause effect pair $X \to Y$.
Let $M\sim\mathrm{Bernoulli}(q)$ indicate missingness of the parent, and 
for direction $\tau$, define $v_{0,\tau}$ as the residual variance when the parent is observed ($M=0$) and $v_{1,\tau}$ when it is missing ($M=1$).

A Gaussian model with a single variance parameter in direction $\tau$ must match the mask-averaged residual variance
$\bar\sigma_{\tau}^2 = (1-q)\,v_{0,\tau} + q\,v_{1,\tau}\;,$
yielding population log-likelihood
$
\mathcal L_{\tau,\mathrm{fix}}^\star = -\tfrac12 \log \bar\sigma_{\tau}^2 + \text{const}\;.
$
A missingness-aware model fits separate variances for $M=0$ and $M=1$, giving
$
\mathcal L_{\tau,\mathrm{inc}}^\star
= -\tfrac12\bigl[(1-q)\log v_{0,\tau} + q\log v_{1,\tau}\bigr] + \text{const}\;.
$
Since $-\log$ is strictly convex, if $v_{0,\tau}\neq v_{1,\tau}$ then Jensen's inequality implies
$\mathcal L_{\tau,\mathrm{inc}}^\star > \mathcal L_{\tau,\mathrm{fix}}^\star\;,$ 
that is accounting for missingness strictly improves the likelihood in either direction.

The above suggests that the variance increment in Eq. \eqref{eq:variance-increment} helps the model distinguish cause from effect.
We now examine conditions under which this is the case. 
To that end, we compare 
\begin{equation}
\Delta_{\mathrm{fix}} := \mathcal{L}_{\text{fwd},\mathrm{fix}}^\star - \mathcal{L}_{\text{bwd},\mathrm{fix}}^\star \; \text{and} \; \Delta_{\mathrm{inc}} := \mathcal{L}_{\text{fwd},\mathrm{inc}}^\star - \mathcal{L}_{\text{bwd},\mathrm{inc}}^\star\;.
\end{equation}
Then, we can show the following.
\begin{restatable}[Effect of Missingness on Likelihood Asymmetry]{theorem}{theoremmissing}\label{thm:snr-amplification}
Consider an additive noise model $Y=f(X)+N$ with $N\indep X$ and $\Var(N)>0$ and missingness
indicator $M_X, M_Y\sim\mathrm{Bernoulli}(q)$ with $q\in(0,1)$.
Then, the difference in likelihood is amplified
iff the forward signal-to-noise ratio $\Var(f(X))/\Var(N)$ exceeds the backwards ratio $\Var(\E[X\mid Y])/\E[\Var(X\mid Y)]$, i.e.,
\begin{align}
\Delta_{\mathrm{inc}} > \Delta_{\mathrm{fix}}
\quad\Longleftrightarrow\quad
\frac{\Var(f(X))}{\Var(N)} > \frac{\Var(\E[X\mid Y])}{\E[\Var(X\mid Y)]}\;.
\end{align}
\end{restatable}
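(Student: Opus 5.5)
The plan is to compute the four residual variances $v_{0,\tau},v_{1,\tau}$ explicitly. Next, I will reduce the gap $\Delta_{\mathrm{inc}}-\Delta_{\mathrm{fix}}$ to a difference of a single scalar function evaluated at the forward and backward variance ratios. Finally, I will show that this function is strictly monotone.

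\emph{Step 1 (residual variances).} I will work at the population level with MCAR masks, so $M_X,M_Y$ are independent of $(X,Y)$. The likelihood is evaluated on a masked target entry, and only the missingness of the conditioning variable matters. In the forward direction we predict $Y$. If $X$ is observed, the optimal mean is $\E[Y\mid X]=f(X)$ (taking $\E N=0$), so $v_{0,\mathrm{fwd}}=\Var(N)$. If $X$ is missing, the optimal predictor is a constant, so $v_{1,\mathrm{fwd}}=\Var(Y)=\Var(f(X))+\Var(N)$, using $N\indep X$. In the backward direction we predict $X$ from $Y$. This gives $v_{0,\mathrm{bwd}}=\E[\Var(X\mid Y)]$ and, by the law of total variance, $v_{1,\mathrm{bwd}}=\Var(X)=\Var(\E[X\mid Y])+\E[\Var(X\mid Y)]$. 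I will assume $\E[\Var(X\mid Y)]>0$ so that the backward ratio is finite; otherwise the right-hand side is $+\infty$ and the statement degenerates.

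\emph{Step 2 (reduction to a ratio).} By the definitions preceding the theorem,
\[
\Delta_{\mathrm{inc}}-\Delta_{\mathrm{fix}}=J_{\mathrm{fwd}}-J_{\mathrm{bwd}},\qquad J_\tau:=\mathcal L^\star_{\tau,\mathrm{inc}}-\mathcal L^\star_{\tau,\mathrm{fix}}=\tfrac12\Bigl[\log\bigl((1-q)v_{0,\tau}+q v_{1,\tau}\bigr)-(1-q)\log v_{0,\tau}-q\log v_{1,\tau}\Bigr].
\]
The constants cancel because they are identical Gaussian normalisers. The expression $J_\tau$ is invariant under rescaling $(v_0,v_1)\mapsto(cv_0,cv_1)$. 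Writing $\rho_\tau=v_{1,\tau}/v_{0,\tau}$, we therefore get $J_\tau=g(\rho_\tau)$ with
\[
g(\rho)=\tfrac12\bigl[\log(1-q+q\rho)-q\log\rho\bigr].
\]
From Step 1, $\rho_{\mathrm{fwd}}=1+\Var(f(X))/\Var(N)$ and $\rho_{\mathrm{bwd}}=1+\Var(\E[X\mid Y])/\E[\Var(X\mid Y)]$. Both ratios are at least $1$.

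\emph{Step 3 (monotonicity).} This is the only step needing care. Differentiating gives
\[
g'(\rho)=\tfrac{q}{2}\Bigl[\tfrac{1}{1-q+q\rho}-\tfrac1\rho\Bigr]=\frac{q(1-q)(\rho-1)}{2\rho(1-q+q\rho)}.
\]
This is strictly positive for $\rho>1$ when $q\in(0,1)$, and $g(1)=0$. Hence $g$ is strictly increasing on $[1,\infty)$. It follows that $g(\rho_{\mathrm{fwd}})>g(\rho_{\mathrm{bwd}})$ if and only if $\rho_{\mathrm{fwd}}>\rho_{\mathrm{bwd}}$, which is exactly the stated comparison of signal-to-noise ratios; both directions of the equivalence come from strict monotonicity.

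The main obstacle is not the algebra. It is making precise the population model behind Step 1: which missingness indicator is relevant in each direction, and that the ``fixed'' model's best constant variance equals the mask-averaged MSE. I will state these conventions explicitly, consistent with the bivariate discussion before the theorem, and note that $\Var(N)>0$ ensures $\rho_{\mathrm{fwd}}$ is finite.
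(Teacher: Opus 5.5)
Your proposal is correct and follows the paper's route: you compute $v_{0,\tau},v_{1,\tau}$, write the per-direction gain as the single function $\tfrac12[\log(1-q+q\rho)-q\log\rho]$ of the variance ratio $\rho_\tau=v_{1,\tau}/v_{0,\tau}$, and conclude by strict monotonicity from the same derivative $\frac{q(1-q)(\rho-1)}{2\rho(1-q+q\rho)}$. Two of your details slightly tighten the paper, which only states monotonicity on $(1,\infty)$: you extend strict monotonicity to $[1,\infty)$, which covers the case $\rho_{\mathrm{bwd}}=1$, and you make the caveat $\E[\Var(X\mid Y)]>0$ explicit.
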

We defer the proof to Appendix~\ref{app:missingness-proof}.
The result shows that optimizing the likelihood with the variance increment in Eq. \eqref{eq:variance-increment} further increases 
the likelihood gap, facilitating  learning of causally aligned orders in identifiable settings such as CAMs.
More generally, in settings where causal structure and likelihood-optimal factorizations coincide \citep{wendong2025algorithmic}, likelihood maximization provides a principled approach to learning variable orderings without requiring ground-truth graphs. 
\ourmethod{} instantiates this principle in an in-context learning framework. 

\section{Related Work}
\label{sec:related}
\paragraph{Tabular Foundation Models.} 
Recent deep learning approaches for tabular data adopt an in-context learning paradigm using transformer-based models trained on large collections of tables \citep{muller:22:transformers}. 
\tabpfn introduced a transformer pre-trained on synthetic tasks drawn from a prior over structural mechanisms, achieving state-of-the-art performance without finetuning \citep{hollmann:23:tabpfn,hollmann:25:tabpfn2}. Subsequent work improves scalability \citep{qu2025tabicl} and retrieval \citep{ma2025tabdpt}.  
Existing TFMs primarily focus on predicting a single target variable and do not impose structural constraints on feature--feature relationships. Work on causality in TFMs has so far focused on treatment effect estimation \citep{robertson2025dopfn,balazadeh2025causalpfn}, typically relying on amortized learning over a synthetic SCM prior rather than inferring dataset-specific causal structure.
  
\paragraph{Missing Value Imputation.} 
Classical imputation methods include mean imputation, k-nearest neighbors, and Multiple Imputation by Chained Equations (\mice) \citep{van2011mice}, which iteratively fits conditional models. MIRACLE \citep{kyono2021miracle} explicitly models structured missingness by conditioning on estimated causes of missingness, improving robustness under non-random missingness. This is complementary to our approach, which focuses on learning causal relationships between features to improve imputations.
Deep generative models such as \gain \citep{yoon2018gain} and \miwae \citep{mattei2019miwae} learn joint feature distributions, while transformer-based approaches such as \naim \citep{caruso2024not} model conditional distributions directly. \tabimpute \citep{feitelberg2025tabimpute} leverages pretraining on synthetic missingness tasks but does not impose structural constraints.

\paragraph{Causal Discovery.}
Identifying causal structure from observational data requires assumptions on functional forms or noise distributions \citep{shimizu:06:lingam,hoyer:08:anm,zhang:09:pnl}. \ourmethod is conceptually related to order-based discovery methods \citep{buhlmann:14:cam,rolland:22:score,montagna:23:das,montagna:23:nogam,xu:25:topic} and gradient-based DAG learners \citep{zheng:18:notears, Lachapelle2020Gradient}. However, these methods solve dataset-specific optimization problems to infer causal structure from a single dataset. In contrast, amortized approaches like \avici \citep{lorch:22:avici} learn to infer graphs across datasets through supervised training on ground-truth graphs \citep{ke2022learning}, although it has been suggested that  this supervised approach may be sensitive to synthetic priors \citep{montagna2025demystifying}.  \ourmethod uniquely combines amortized learning with unsupervised, likelihood-based order discovery for downstream prediction.

\section{Experiments}\label{sec:experiments}
We evaluate whether \ourmethod achieves a favorable trade-off between I. causal order learning and II. tabular prediction and imputation tasks. 

We consider both synthetic data with known causal structure, and real-world datasets with and without ground-truth graphs. 
Synthetic experiments assess the accuracy of learned orders under a range of nonlinear mechanisms (additive and non-additive), noise levels, and sample sizes (Appendix~\ref{app:synthetic-data}). 
Real-world experiments evaluate predictive performance in missing value imputation and downstream tasks, including robustness under distribution shift and intervention.


\paragraph{Causal Order Inference (Trade Off I).} 
\begin{figure}[t]
	\begin{subfigure}[t]{0.55\linewidth} 
		\begin{tikzpicture}
			\begin{axis}[ pretty boxplot, cycle list name =taborder-box,   height=3.5cm, width=8cm, legend  columns=3, ylabel = {$\divtop$}, legend style={at={(1,1.7)}, font=\scriptsize}, xtick={1,2,3,4,5,6,7,8,9},
			xticklabels={
				\topic, \cam, \textbf{\ourmethod},
				\avici,\nogam, \scoremethod, \dasmethod, \notears, \rtwosort
			},
			major tick length=2pt,
			title ={Top. Divergence  $(\downarrow)$}, ymax=1, pretty labelshift,
			xticklabel style={rotate=25,anchor=east,font=\footnotesize}, xtick style={draw=black,
				yshift=-3pt},
			]
				\foreach \col in {topic,cam,non-additive-unregularized-gp-small,avici-0-5,nogam,score,das,notears,randsort} {
	\addplot table[y=\col] {figs/texdata/order_box_additive_gp_topological_divergence.tsv};
}
			\end{axis}
		\end{tikzpicture}
		\caption{ Synthetic Data (ANM).}
	\end{subfigure}
	\begin{subfigure}[t]{0.45\linewidth}   
\begin{tikzpicture}
	\begin{axis}[
		pretty ybar,
		symbolic x coords={\textbf{\ourmethod},\rtwosort,\topic,\cam,\scoremethod,\nogam,\avici,\dasmethod,\notears},
		xtick=data,
		xticklabel style={rotate=25, anchor=east,font=\footnotesize},
		ylabel={ },
		title ={Top. Divergence  $(\downarrow)$},  
		ymin=0, ymax=1,
		height=3.5cm,
		width=6.5cm,
		ytick={0,0.5,1},
		ytick style={draw=black},major y tick style={draw=black},
		]
		\addplot+[
		fill=arch-green!40,
		draw=arch-green,
		]
	coordinates {
		(\textbf{\ourmethod},0.205882)
		(\rtwosort,0.352941)
		(\topic,0.352941)
		(\cam,0.411765)
		(\scoremethod,0.529412)
		(\nogam,0.529412)
		(\avici,0.529412)
		(\dasmethod,0.588235)
		(\notears,0.588235)
	};
	\end{axis}
\end{tikzpicture}
	\caption{Real-World Data \citep{sachs:05:sachs}.}
	\end{subfigure} 
	\caption{\textbf{Causal Order Inference.} Shown is the topological divergence ($\divtop$, lower is better) of estimated causal orders on synthetic datasets  generated from nonlinear Gaussian process ANMs (\textbf{a}) and on the real-world causal discovery benchmark by \cite{sachs:05:sachs} (\textbf{b}).  }	\label{fig:order}
\end{figure}
We first evaluate the quality of the causal orders discovered by \ourmethod. 
Given a learned order $\hatorder$ and ground-truth DAG $G$, we measure topological divergence
  $\divtop(\hatorder,G) = \sum_{i=1}^{\nfeat} \sum_{j:\hatorder(i)\geq \hatorder(j)} \mathbb I[(i,j)\in G]$,
   where lower values indicate  fewer edge direction disagreements of $\hatorder$ with $\G$. For comparison, we consider a selection of recent topological-ordering based causal structure learning algorithms, \cam{} \citep{buhlmann:14:cam}, \scoremethod{} \citep{rolland:22:score}, \dasmethod{} \citep{montagna:23:das}, \nogam{} \citep{montagna:23:nogam}, \notears{} \citep{zheng:18:notears} and \topic{} \citep{xu:25:topic},   
   as well as the amortized learner \avici \citep{lorch:22:avici}  and the simple sorting baseline
   \rtwosort \citep{reisach:23:sortability}.

Figure \ref{fig:order} shows the topological divergence $\divtop$ across methods. On synthetic data (Figure \ref{fig:order} (a)), \ourmethod discovers causal orderings  of matching quality compared to the outputs of structure learning algorithms tailored to this task. It discovers such orders without relying on explicit regression and graph learning (\cam, \topic) and accompanying scalability constraints (\ref{app:scalability}), but allows order inference in near-instant time; and without relying on ground truth graphs during training (\avici).

\setlength{\intextsep}{4pt}
\vspace{-0.2cm}
\begin{wrapfigure}{r}{0.52\textwidth}
	\centering
	
	\begin{subfigure}[t]{0.48\linewidth}
		\centering
		
		\begin{tikzpicture}
			\begin{axis}[
				pretty ybar,
				symbolic x coords={Correct Order,\ourmethod,Incorrect Orders},
				xtick=data,
				xticklabel style={
					rotate=25,
					anchor=east,
					font=\footnotesize
				},
				ylabel={NLL $(\downarrow)$}, 
				ymin=0,
				ymax=0.30,
				height=3cm,
				width=\linewidth,
				enlarge x limits=0.18
				]
				
				\addplot+[
				fill=arch-green!40,
				draw=arch-green,
				error bars/.cd,
				y dir=both,
				y explicit,
				]
				coordinates {
					(Correct Order,0.071) +- (0,0.000)
					(\ourmethod,0.082) +- (0,0.000)
					(Incorrect Orders,0.257) +- (0,0.017)
				};
				
			\end{axis}
		\end{tikzpicture}
		
		\caption{Replacing the  order with correct vs. incorrect orders.}
		
	\end{subfigure}
	\hfill
	\begin{subfigure}[t]{0.48\linewidth}
		\centering
		
		\begin{tikzpicture}
			\begin{axis}[
				pretty line,
				pretty labelshift,
				cycle list name=pr-colors-conf, 
				xlabel={$\divtop$ $(\downarrow)$},
				ylabel={NLL $(\downarrow)$},
				xmin=0,
				xmax=0.5269,
				ymin=0,
				ymax=0.3 ,
				height=3cm,
				width=\linewidth,
				]
				
				\addplot+
				table[row sep=\\, x=x, y=y] {
					x y ci \\
					0.000000 0.083517 0.000000 \\
					0.227756 0.146868 0.006600 \\
					0.318477 0.177129 0.010920 \\
					0.426564 0.229967 0.006003 \\
					0.465663 0.250829 0.006040 \\
					0.501832 0.248546 0.007044 \\
				};
				
				\addplot[
				forget plot,
				draw=none,
				name path=low0
				]
				table[row sep=\\, x=x, y expr=\thisrow{y} - \thisrow{ci}] {
					x y ci \\
					0.000000 0.083517 0.000000 \\
					0.227756 0.146868 0.006600 \\
					0.318477 0.177129 0.010920 \\
					0.426564 0.229967 0.006003 \\
					0.465663 0.250829 0.006040 \\
					0.501832 0.248546 0.007044 \\
				};
				
				\addplot[
				forget plot,
				draw=none,
				name path=up0
				]
				table[row sep=\\, x=x, y expr=\thisrow{y} + \thisrow{ci}] {
					x y ci \\
					0.000000 0.083517 0.000000 \\
					0.227756 0.146868 0.006600 \\
					0.318477 0.177129 0.010920 \\
					0.426564 0.229967 0.006003 \\
					0.465663 0.250829 0.006040 \\
					0.501832 0.248546 0.007044 \\
				};
				
				\addplot fill between[of=low0 and up0];
				
			\end{axis}
		\end{tikzpicture}
		
		\caption{NLL vs. divergence between  used   and true   order.}
		
	\end{subfigure}
	
	\caption{\textbf{Order Impact.}
		Shown is the effect of replacing learned orders at test time. Incorrect orders degrade prediction (NLL) as the considered order diverges from the true causal order ($\divtop$).}
	\label{fig:orderablation}
\end{wrapfigure}
\paragraph{Order Impact.}
Next we assess predictive performance in relation to the quality of the learned causal order. We impose different orders when evaluating \ourmethod,  including a correct causal order,  the learned order, and multiple sampled incorrect orders. To show the effect on missing value imputation, we report the Negative Log-Likelihood (NLL, smaller is better). 
As Figure~\ref{fig:orderablation}a shows, the NLL of the correct and learned order is close and much smaller than for incorrect orders. For these incorrect orders, we  show in Figure~\ref{fig:orderablation}b how the NLL behaves as the divergence $\divtop$ of corrupted orders from the true order increases, showing it worsens monotonically.  

We also evaluate order discovery on a common standard real-world benchmark consisting of multiparameter single-cell data
 \cite{sachs:05:sachs} where a given causal DAG reflects causal cell signaling pathways among 11 variables. The main results (Figure \ref{fig:order} (b)) show that \ourmethod is the only method to discover causal order with $\divtop\approx0.21$ divergence. We show the divergence for the main reference dataset (condition \texttt{cd3cd28}) but besides this, multiple interventional datasets are available, for which \ourmethod consistently achieves topological divergences of $0.12-0.29$, with the exception of one condition where a perturbation on the hub node PKC was applied   (condition \texttt{g0076}). All results are shown in detail in Appendix \ref{app:additional-sachs-results}.

\paragraph{Predictive Performance (Trade Off II).} \pgfplotsset{
  rankplot/.style={
    width=\linewidth,
    height=3.5cm,
    ymin=1,
    ymax=8,
    y dir=reverse,
    xtick={0.1,0.2,0.3,0.4,0.5},
    xticklabel style={/pgf/number format/fixed},
    xlabel={Missing rate},
    smooth,
    ticklabel style={font=\scriptsize},
    label style={font=\small},
  },
}

\begin{figure}
    \centering

    \newcommand{\rankdir}{expres/imputation/rank_by_method_subset}
    
    \begin{subfigure}[t]{0.24\linewidth}
        \centering
        \begin{tikzpicture}
            \begin{axis}[
                rankplot,
                pretty line,
                pretty labelshift,
                cycle list name=topic-line,
                ylabel={Avg.~Rank},
                xlabel={Missing rate},
                legend style={
                    at={(0.8,1.7)},
                    anchor=north west,
                    font=\scriptsize,
                    legend columns=4,
                },
            ]
            \pgfplotsinvokeforeach{finetuned-foundation,hyperimpute,missforest,ice,gain,mice,softimpute,knn}{
                \addplot+
                table[
                    x=missing_rate,
                    y=imputation_rank,
                    col sep=comma,
                ]{\rankdir/CTR23_#1.csv};
            }
            \legend{\ourmethod, \hyperimpute, \missforest, \iceimp, \gain, \mice, \softimpute, \knnimp}
            \end{axis}
        \end{tikzpicture}
        \caption{CTR23: Imputation}
    \end{subfigure} 
    \begin{subfigure}[t]{0.24\linewidth}
        \centering
        \begin{tikzpicture}
            \begin{axis}[
                rankplot,
                pretty line,
                pretty labelshift,
                cycle list name=topic-line,
                xlabel={Missing rate},
            ]

            \pgfplotsinvokeforeach{finetuned-foundation,hyperimpute,missforest,ice,gain,mice,softimpute,knn}{
                \addplot+
                table[
                    x=missing_rate,
                    y=prediction_rank,
                    cycle list name=mamba,
                    col sep=comma,
                ]{\rankdir/CTR23_#1.csv};
            }

            \end{axis}
        \end{tikzpicture}
        \caption{CTR23: Prediction}
    \end{subfigure} 
    \begin{subfigure}[t]{0.24\linewidth}
        \centering
        \begin{tikzpicture}
            \begin{axis}[
                rankplot,
                pretty line,
                pretty labelshift,
                cycle list name=topic-line,
                ylabel={Avg.~Rank},
                xlabel={Missing rate},
                legend pos=north east,
            ]
            \pgfplotsinvokeforeach{finetuned-foundation,hyperimpute,missforest,ice,gain,mice,softimpute,knn}{%
                \addplot+
                table[
                    x=missing_rate,
                    y=imputation_rank,
                    cycle list name=mamba,
                    col sep=comma,
                ]{\rankdir/CC18_#1.csv};
            }
            \end{axis}
        \end{tikzpicture}
        \caption{CC18: Imputation}
    \end{subfigure} 
    \begin{subfigure}[t]{0.24\linewidth}
        \centering
        \begin{tikzpicture}
            \begin{axis}[
                rankplot,
                pretty line,
                pretty labelshift,
                cycle list name=topic-line,
                xlabel={Missing rate},
            ]

            \pgfplotsinvokeforeach{finetuned-foundation,hyperimpute,missforest,ice,gain,mice,softimpute,knn}{%
                \addplot+
                table[
                    x=missing_rate,
                    y=prediction_rank,
                    cycle list name=mamba,
                    col sep=comma,
                ]{\rankdir/CC18_#1.csv};
            }

            \end{axis}
        \end{tikzpicture}
        \caption{CC18: Prediction}
    \end{subfigure}
    \caption{\textbf{Predictive Performance  (Trade Off II)}. Shown is the average rank of imputation (a, c) and downstream prediction (b, d) performance on the CTR23 (left) and CC18 (right) datasets. \ourmethod
    is the most accurate imputation method for $\geq 40\%$ missingness.}
    \label{fig:imputation_results}
\end{figure}
We now evaluate the predictive performance of \ourmethod on real-world data, measuring both imputation accuracy and downstream predictive performance. We use the OpenML CC18 classification and CTR23 regression benchmarks \citep{bischl2021openml,fischer2023openml}. We consider the datasets with at least 80\% real-valued inputs, yielding 33 CC18 and 23 CTR23 datasets, listed in Appendix~\ref{app:datasets}. We apply a split into 80\% train and 20\% test data as well as mask 10–50\% of entries uniformly at random. We compare against standard imputers \knnimp, \iceimp, \mice \citep{van2011mice}, \missforest \citep{stekhoven2012missforest}), neural methods (\gain \citep{yoon2018gain}), and \hyperimpute \citep{jarrett2022hyperimpute}; additional baselines are reported in Appendix~\ref{app:additional-imputation-results}. 
For the trained \ourmethod, we perform fine-tuning on the training data only. For downstream evaluation, we train an \xgboost model \citep{chen:16:xgboost} on imputed data (classification and regression), except that for CTR23 we directly impute missing targets with \ourmethod at test time.

 \setlength{\intextsep}{4pt}
 
\pgfplotstableread[col sep=tab]{figs/texdata/three_variable_intervention_mse_hard.tsv}\datatablehard
\pgfplotstableread[col sep=tab]{figs/texdata/three_variable_intervention_mse_mech_shift.tsv}\datatablemechshift
\begin{wrapfigure}{r}{0.46\textwidth}
	\vspace{-0.6em}
	\centering
	  
	  	\centering
	  	\begin{tikzpicture}
	  		\begin{axis}[
	  			pretty boxplot,
	  			width=\linewidth,
	  			pretty labelshift,
	  			height=3.5cm,
	  			ymax=1,
	  			ylabel={MSE},
	  			xtick={1,2,3,4,5,6},
	  			xticklabels={,i.i.d.,,,{~Intervened},,},
	  			legend columns=3,
	  			legend style={font=\scriptsize, at={(0.5,1.1  )}, anchor=south},
	  			]
	  			\addplot[{pr-color1a, fill=pr-color1a!40}] table[y=cfm_noniv] {\datatablemechshift};
	  			\addlegendentry{\ourmethod}
	  			\addplot[{pr-color1b, fill=pr-color1b!40}] table[y=xgboost_noniv] {\datatablemechshift};        
	  			\addlegendentry{XGBoost}
	  			\addplot[{pr-color1c, fill=pr-color1c!40}] table[y=tabpfn_noniv] {\datatablemechshift};
	  			\addlegendentry{\tabpfn}
	  			\addplot[{pr-color1a, fill=pr-color1a!40}] table[y=cfm_iv] {\datatablemechshift};
	  			\addplot[{pr-color1b, fill=pr-color1b!40}] table[y=xgboost_iv] {\datatablemechshift};
	  			\addplot[{pr-color1c, fill=pr-color1c!40}] table[y=tabpfn_iv] {\datatablemechshift};
	  		\end{axis}
	  	\end{tikzpicture}
	
	\caption{
		\textbf{Intervention-Robust Prediction.}
		Shown is the MSE for predicting $Y$ in the three-variable chain on an i.i.d.~and an intervened sample $Y \to Z$ (mechanism shift). \ourmethod remains accurate under the intervention via the learned causal order, while \tabpfn and \xgboost degrade in performance.
	}
	\label{fig:three-variable-intervention-mse}
	
	\vspace{-1em}
\end{wrapfigure}
Figure~\ref{fig:imputation_results} reports average ranks across missingness levels. \ourmethod achieves the best imputation performance at higher missingness (CTR23 for $q \geq 30\%$, CC18 for $q \geq 40\%$),   \hyperimpute   at lower missingness. For downstream tasks, \ourmethod is competitive on CC18 and outperforming on CTR23 at $q \geq 40\%$, benefiting from directly imputing   values without retraining.
 
\begin{figure}[t] 
	\begin{minipage}[t]{ \linewidth}  
			\begin{subfigure}{0.5\linewidth} 
			\centering
			\ResetSachsInterventions
			\ShowSachsInhibitorFour
			\ShowSachsFlipsGzero
			\scalebox{0.7}{\SachsFlipGraphSmall{\texttt{cd3cd28\_g0076}}{0.765}}
			\caption{Interventional Condition  \textbf{\texttt{cd3cd28\_g0076}.} }
		\end{subfigure} 
		\begin{subfigure}{0.45\linewidth} 
			\centering
			\ResetSachsInterventions
			\ShowSachsInhibitorEight
			\ShowSachsFlipsPsitect
			\scalebox{0.7}{\SachsFlipGraphSmall{\texttt{cd3cd28\_psitect}}{0.235}}
			\caption{Interventional Condition \textbf{\texttt{cd3cd28\_psitect}.} }
		\end{subfigure}
		\caption{\textbf{Changes in Inferred Orders under Intervention.} In the \cite{sachs:05:sachs} data, we compare two interventional conditions (a, b) to the reference condition, and count flips in node positions in the order  by \ourmethod (darker for more flips). An upstream intervention (a) leads to global changes in the order; a downstream one (b) to a local change of one target.  
		}
		\label{fig:sachs-intervention-graph-flips-small}
	\end{minipage} 
\end{figure}

\paragraph{Intervention-Robust Prediction.} We continue investigating the predictive performance but move to an interventional setting. We revisit the chain SCM $X \to Y \to Z$ introduced in Figure~\ref{fig:three-variable-intervention-preds} with focus on predicting the mediator $Y$ given  $X$ and $Z$. We  sample observations   from an SCM with nonlinear mechanisms and on the test set,  
change the mechanism of $Z$ for $50\%$ of the samples.  
Figure~\ref{fig:three-variable-intervention-mse} shows that while all methods perform well on  datapoints drawn from the same distribution (i.i.d.),   
 \tabpfn and \xgboost suffer under the mechanism  change  (Intervened)  to which \ourmethod  remains  reasonably robust.   
 This provides empirical evidence for our motivating idea that using causal order use is beneficial for intervention-robust  predictive performance. 

  \paragraph{Unknown Biological Interventions.} Last, to  study the effects of interventions outside the synthetic setting, we revisit   the real-world data by \cite{sachs:05:sachs}. We take the causal order inferred for the near-observational dataset (\texttt{cd3cd28}) as a reference, and compare it to the orders inferred by \ourmethod for two interventional datasets (\texttt{g0076}, \texttt{psitect}). To allow for a qualitative analysis that shows how the position of each of the nodes in the order changes, we use a simple metric over pairwise position flips of the nodes (Appendix \ref{app:additional-sachs-results}). In Figure \ref{fig:sachs-intervention-graph-flips-small}, we depict the 11 observed variables (solid nodes), the fixed consensus graph (edges), the biological background knowledge on intervention effects (dashed nodes) and the position flips that occurred for each node (color, dark blue indicates more position flips compared to  \texttt{cd3cd28}). The results contrast the effect of an intervention on an upstream hub node PKC (a), which leads  to changes in the order for almost all nodes, against the effect of a downstream  intervention on  PIP2 and PIP3 (b), which leads to a localized change at the  intervention target while leaving the remaining system unaffected. Analyzing such perturbations  in biological systems can be useful given that intervention effects are often not  a priori known.

\section{Conclusion} 

Towards addressing the disconnect between associative in-context learning and causal structure learning, we integrate causal order learning within a transformer-based framework for tabular data. Our architecture \ourmethod jointly infers and uses causal orderings to constrain predictive attention. 
While the model is not limited to specific functional forms during training or inference, causal structure identification is not generally possible from observational data without further assumptions, such that we provided theoretical justification for our approach within the class of additive noise models 
 Our experiments cover both causal order discovery and predictive tasks, where \ourmethod achieved reasonable performance across both dimensions. On  real-world  data, the   inferred topological orderings  align very closely with domain knowledge, and the discovered orders across interventional conditions give insights into the localized effects of the  perturbations.  

Regarding current limitations, the  implementation focuses primarily on learning topological orderings rather than full directed acyclic graphs as we consider this sufficient for the predictive and interventional tasks under consideration. The refinement of these orders into sparse graph structures is well explored using pruning via sparse regression, but future work could address this aspect more directly within the transformer architecture. Furthermore, investigating broader training priors and systematically studying their effect on amortized learners is not the focus of this work, but an important consideration for improving the quality of pretraining. In this context, the use of explicitly interventional data during training is likely particularly promising given the aim to learn causal structure, although this requires access to richer training data where such interventions are present. These research efforts could better connect the areas of structure learning and predictive modeling toward more interpretable and intervention-robust in-context learning.

\bibliographystyle{plainnat}
\bibliography{bib/paper}

\newpage
\appendix
\onecolumn

\appendix

\section{Detailed Theoretical Results}
 
 \subsection{Background and Assumptions} \label{app:background}

 Our analysis builds on a set of standard assumptions commonly used in causal structure learning. We give a shortened overview below and refer to, e.g., \cite{pearl:09:causality} for a detailed exposition. 
 
 \textbf{(i) Causal Markov Condition.}
 The joint distribution factorizes according to the true causal DAG $G^\star$, i.e.,
 each variable $X_i$ is conditionally independent of its non-descendants given its parents $\pa_i$.
 
 \textbf{(ii) Faithfulness.}
 All conditional independencies in the data distribution are entailed by the causal DAG.
 That is, there are no independencies arising from exact cancellations of parameters.
 
 \textbf{(iii) Causal Sufficiency.}
 There are no unobserved confounders: all common causes of observed variables are included in the model.
 
 \textbf{(iv) Additive Noise Model (ANM).}
 Data are generated according to a structural causal model of the form
 \begin{equation}
 X_i = \sum_{j \in \pa_i} f_{ij}(X_j) + N_i,
 \end{equation}
 where the noise variables $N_i$ are jointly independent and the functions $f_{ij}$ are nonlinear and sufficiently smooth.
 
 \medskip
 Under these assumptions, causal directions become identifiable in many settings
 \citep{shimizu:06:lingam,hoyer:08:anm,zhang:09:pnl,buhlmann:14:cam},
 and likelihood- or variance-based scores are consistent for recovering valid
 topological orders of the underlying DAG.
 \subsection{Identifiability of Causal Orders}\label{app:order-identifiability}
Identifying the causal graph and thereby recovering a valid topological
ordering is feasible for certain classes of functional mechanisms \citep{shimizu:06:lingam,hoyer:08:anm,zhang:09:pnl}.
Because our learning objective is based on minimizing conditional error variance, we focus on \textbf{causal
	additive models} \citep{buhlmann:14:cam}, where
\begin{equation}
	X_i = \sum_{j \in \pa_i} f_{ij}(X_j) + N_i,
	\label{eq:cam-model}
\end{equation}
with smooth functions $f_{ij}$ and independent noise.
\citet{buhlmann:14:cam} show that likelihood-/residual-variance–based scores
consistently recover orders that respect the data-generating causal DAG.  
A simplified version of their main result reads as follows.
\begin{theorem}[Order Consistency in Causal Additive Models; \citealp{buhlmann:14:cam}]
	\label{thm:cam-simplified}
	In a causal additive model with non-linear functions and independent, non-zero noise,
	let $\hat{\pi}$ be the order minimizing residual variance.  
	Then,
	\begin{equation}
		\mathbb{P}(\hat{\pi} \in \Pi_0) \to 1 \quad \text{as } n \to \infty,
	\end{equation}
	where $\Pi_0$ denotes the set of all topological orders of the true causal DAG.
	Any order violating a causal edge incurs a strictly larger expected
	residual variance.
\end{theorem}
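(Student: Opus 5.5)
The plan is to split the statement into a population part (the best score is attained exactly on $\Pi_0$, with a strict gap for every order outside it) and a statistical part (the empirical score converges uniformly over the finitely many orders). Following \citet{buhlmann:14:cam}, I would take ``residual variance'' to mean the Gaussian log-likelihood score
\begin{equation*}
S(\order)=\sum_{k=1}^{\nfeat}\log \sigma^2_{\order,k},
\end{equation*}
where $\sigma^2_{\order,k}$ is the smallest expected squared residual of $X_{\order(k)}$ over additive functions of its predecessors $X_{\order(1:k-1)}$. The plain sum of residual variances is not order-invariant in the right way, so the log form is the one to use. I would also make explicit the assumption, implicit in the simplified statement, that the noise is Gaussian, as in \citet{buhlmann:14:cam}.

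\textbf{Population level, correct orders.} First I take any $\order\in\Pi_0$. All parents of $X_{\order(k)}$ precede it, and the noise is independent of those predecessors. Hence the additive regression of $X_{\order(k)}$ on its predecessors recovers $\sum_{j\in\pa} f_{ij}$, and $\sigma^2_{\order,k}=\Var(N_{\order(k)})$. So every order in $\Pi_0$ attains the same value $S^\star=\sum_i\log\Var(N_i)$. This also equals $2\cdot$(negative expected log-density) up to constants, because the true density factorizes as in Eq.~\eqref{eq:causal-factorization}.

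\textbf{Population level, incorrect orders.} Next I take any $\order\notin\Pi_0$. The fitted additive Gaussian conditionals define a density $Q_\order$ that is Markov to a DAG compatible with $\order$. Its expected negative log-likelihood equals $\tfrac12 S(\order)$ plus a constant. By Gibbs' inequality,
\begin{equation*}
\tfrac12\bigl(S(\order)-S^\star\bigr)=\mathrm{KL}(P\,\|\,Q_\order)\ge 0.
\end{equation*}
Strictness is the \textbf{main obstacle}. It requires showing that $P$ cannot equal a causal additive model (CAM) with Gaussian noise whose order reverses a true edge. Here I would invoke identifiability of CAMs with nonlinear $f_{ij}$ and Gaussian noise (Peters et al., 2014): if $Q_\order=P$, then $P$ would admit two CAM representations with different DAGs, which is impossible. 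I would first reduce to a bivariate statement. Take the earliest-in-$\order$ reversed edge and condition on suitable sets, using the Markov and faithfulness assumptions of Appendix~\ref{app:background}. This yields a bivariate nonlinear ANM that would have to be reversible, contradicting \citet{hoyer:08:anm}. Since there are finitely many orders, this gives a uniform gap $\xi=\min_{\order\notin\Pi_0}(S(\order)-S^\star)>0$.

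\textbf{Finite samples.} Finally, let $\hat S_n(\order)$ be the score computed from penalized (or sieve) additive regression estimates. Under the smoothness assumptions these estimators are consistent, so $\hat\sigma^2_{\order,k}\to\sigma^2_{\order,k}$ in probability for each $\order$ and $k$. Because $|S_\nfeat|=\nfeat!$ is finite, this gives $\max_\order|\hat S_n(\order)-S(\order)|\to0$ in probability. On the event that this maximum is below $\xi/2$, the minimizer $\hat\order$ of $\hat S_n$ must lie in $\Pi_0$. Therefore $\mathbb P(\hat\order\in\Pi_0)\to1$. The last sentence of the statement is the strict population gap established in the previous step.
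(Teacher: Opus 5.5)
The paper gives no proof of this statement; it only quotes \citet{buhlmann:14:cam}. Your plan reconstructs their argument correctly in outline: a strict population gap $\xi=\min_{\order\notin\Pi_0}(S(\order)-S^\star)>0$ for the Gaussian log-likelihood score, obtained from Gibbs' inequality and CAM identifiability, followed by uniform convergence over the $\nfeat!$ orders.

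Your two amendments are necessary, not cosmetic, because the statement as literally written is false.
\begin{itemize}
\item Rescaling the cause, $X\mapsto cX$, keeps a nonlinear CAM. It makes the forward-minus-backward plain sum of residual variances equal to $c^2\Var(\E[X\mid Y])-\Var(f(X))$. For large $c$ this is positive, so the plain sum prefers the anti-causal order.
\item Take $Y=X+\varepsilon h(X)+N$ with uniform $X$, Gaussian $N$ and small $\varepsilon\neq0$. At $\varepsilon=0$, linear backward regression already ties the forward log score, and $\E[X\mid Y]$ is nonlinear, so the best backward regression does strictly better. By continuity the backward log score stays strictly smaller for small $\varepsilon\neq0$. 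Hence Gaussian noise is needed at every node, sources included.
\end{itemize}

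One sub-step would fail if carried out as you describe: picking the earliest reversed edge does not in general give a valid bivariate reduction. Take $G_0$ with edges $L\to K$, $K\to M$, $L\to M$ and $\order=(K,M,L)$.
\begin{itemize}
\item Under your contradiction hypothesis, $P$ is faithful to the complete $G_0$, so any DAG along $\order$ representing $P$ must be complete. In particular $M$ is a parent of $L$ in it.
\item For the pair $(L,K)$, any admissible conditioning set must therefore contain $M$. But $M$ is a $G_0$-child of $K$, and given $M$ the noise $N_K$ is no longer independent of $L$.
\item The pair $(L,M)$ with $S=\{K\}$ does work.
\end{itemize}
In general the pair and $S$ must be chosen as in the reduction lemma of Peters et al.\ (2014), which makes $S$ a set of non-descendants in both graphs. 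That lemma is applied after pruning the projected model $Q_\order$ to a causally minimal DAG. Linear components in the pruned model are harmless, since the bivariate contradiction of \citet{hoyer:08:anm} only needs the true mechanism to be nonlinear with Gaussian noise. Faithfulness is not required. Simply citing the CAM identifiability result avoids all of this.

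Finally, two ingredients have to be imposed as assumptions; they do not follow from smoothness of the $f_{ij}$ alone. These are the existence of the additive $L^2$ projection (closedness of the additive function space) and consistency of the additive fits in misspecified orders.
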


For this particular class of ANMs,
likelihood-optimal orders are consistent with the topological orders of
the causal DAG.  This property is key for \ourmethod{}: maximizing the likelihood
naturally guides the optimization toward orders that match
the underlying causal structure, because conditioning on the true parents
yields the overall lowest-variance conditionals.  
In this way, the likelihood itself provides a causal inductive bias,
steering the learned order toward compatibility with the generative process,
without requiring any explicit supervision or information about the causal graph.


\subsection{Formal Proof of Theorem \ref{thm:snr-amplification}}
\label{app:missingness-proof}
\theoremmissing*
\begin{proof}
Fix a direction $\tau$ (either forward or backward).
Let $v_{0,\tau}$ be the residual variance when the parent is observed ($M=0$)
and $v_{1,\tau}$ when the parent is missing ($M=1$), with $\Pr(M=1)=q$.

We first consider the log-likelihood under fixed variance versus missingness-aware variance.
An optimal Gaussian conditional model that uses a single variance parameter
must match the mask-averaged residual variance
\begin{equation}
\bar{\sigma}_{\tau}^2 = (1-q)\,v_{0,\tau} + q\,v_{1,\tau},
\end{equation}
hence achieves population log-likelihood
\begin{equation}
\mathcal{L}_{\tau,\mathrm{fix}}^\star = -\tfrac12 \log \bar{\sigma}_{\tau}^2 + C,
\end{equation}
for a constant $C$ independent of $\tau$.
A model that distinguishes the two regimes fits them exactly and achieves
\begin{equation}
\mathcal{L}_{\tau,\mathrm{inc}}^\star
= -\tfrac12\bigl[(1-q)\log v_{0,\tau} + q\log v_{1,\tau}\bigr] + C.
\end{equation}

\paragraph{Likelihood Gain.}
We now characterize the improvement in log-likelihood when using missingness-aware variance.
We define the (nonnegative) gain as
\begin{equation}
G_{\tau} := \mathcal{L}_{\tau,\mathrm{inc}}^\star - \mathcal{L}_{\tau,\mathrm{fix}}^\star.
\end{equation}
By substituting the expressions above and canceling $C$, we obtain for the gain
\begin{align*}
G_{\tau}
&= -\tfrac12\bigl[(1-q)\log v_{0,\tau} + q\log v_{1,\tau}\bigr]
    +\tfrac12 \log\bigl((1-q)\,v_{0,\tau} + q\,v_{1,\tau}\bigr).
\end{align*}
Our objective is now to express the gain in terms of the variance ratio $R_{\tau} := v_{1,\tau}/v_{0,\tau}$ and the missingness rate $q$.
We factor $v_{0,\tau}$ out of the logarithm as 
\begin{equation}
(1-q)\,v_{0,\tau} + q\,v_{1,\tau}
= v_{0,\tau}\bigl((1-q) + q\,R_{\tau}\bigr).
\end{equation}
Therefore, we substitute for the logarithm of the single variance model
\begin{equation}
\log\bigl((1-q)\,v_{0,\tau} + q\,v_{1,\tau}\bigr)
= \log v_{0,\tau} + \log\bigl((1-q)+qR_{\tau}\bigr).
\end{equation}
By expanding $\log v_{1,\tau} = \log(v_{0,\tau}R_{\tau}) = \log v_{0,\tau} + \log R_{\tau}$ and plugging both into
$G_{\tau}$ yields
\begin{align}
G_{\tau}
&= -\tfrac12\Bigl[(1-q)\log v_{0,\tau} + q\bigl(\log v_{0,\tau} + \log R_{\tau}\bigr)\Bigr]
   +\tfrac12\Bigl[\log v_{0,\tau} + \log\bigl((1-q)+qR_{\tau}\bigr)\Bigr] \\
&= -\tfrac12\Bigl[\bigl((1-q)+q\bigr)\log v_{0,\tau} + q\log R_{\tau}\Bigr]
   +\tfrac12\Bigl[\log v_{0,\tau} + \log\bigl((1-q)+qR_{\tau}\bigr)\Bigr] \\
&= -\tfrac12\Bigl[\log v_{0,\tau} + q\log R_{\tau}\Bigr]
   +\tfrac12\Bigl[\log v_{0,\tau} + \log\bigl((1-q)+qR_{\tau}\bigr)\Bigr] \\
&= \tfrac12\Bigl[\log\bigl((1-q)+qR_{\tau}\bigr) - q\log R_{\tau}\Bigr].
\end{align}
Thus $G_{\tau}$ depends on $(v_{0,\tau},v_{1,\tau})$ only through the ratio $R_{\tau}$ and the missingness rate $q$:
\begin{equation}
\label{eq:gain}
G_{\tau} = G(R_{\tau},q)
:= \frac12\Bigl[\log\bigl((1-q)+qR_{\tau}\bigr) - q\log R_{\tau}\Bigr].
\end{equation}

We now differentiate $G(R,q)$ with respect to $R$ to establish monotonicity under masking rates of $q \in (0,1)$
\begin{equation}
\frac{\partial G}{\partial R}
= \frac12\left(\frac{q}{(1-q)+qR} - \frac{q}{R}\right)
= \frac{q(1-q)(R-1)}{2R\bigl((1-q)+qR\bigr)}.
\end{equation}
If the ratio $R>1$, then all terms in the numerator and denominator are positive,
and therefore the derivative is positive, so $G(R,q)$ is strictly increasing in $R$ on $(1,\infty)$, i.e.
\begin{equation}
    \boxed{R_1 > R_2 > 1 \implies G(R_1,q) > G(R_2,q).}
\end{equation}
The case when $R=1$ corresponds to no gain, since then missingness does not affect the residual variance.
This is only achievable in pathological cases (e.g., constant functions). We have now established general 
conditions under which the gain in one direction exceeds that in the other.
We now move to provide explicit conditions in the ANM case.

\paragraph{Gain in Additive Noise Models.}
Let $\Delta_{\mathrm{fix}} := \mathcal{L}^\star_{\mathrm{fwd,fix}} - \mathcal{L}^\star_{\mathrm{bwd,fix}}$
and $\Delta_{\mathrm{inc}} := \mathcal{L}^\star_{\mathrm{fwd,inc}} - \mathcal{L}^\star_{\mathrm{bwd,inc}}$.
Then
\begin{equation}
\Delta_{\mathrm{inc}} - \Delta_{\mathrm{fix}}
= \bigl(\mathcal{L}^\star_{\mathrm{fwd,inc}}-\mathcal{L}^\star_{\mathrm{fwd,fix}}\bigr)
 - \bigl(\mathcal{L}^\star_{\mathrm{bwd,inc}}-\mathcal{L}^\star_{\mathrm{bwd,fix}}\bigr)
= G(R_{\mathrm{fwd}},q)-G(R_{\mathrm{bwd}},q).
\end{equation}
By strict monotonicity of $G(\cdot,q)$ on $R>1$,
\begin{equation}
\Delta_{\mathrm{inc}} > \Delta_{\mathrm{fix}}
\quad\Longleftrightarrow\quad
R_{\mathrm{fwd}} > R_{\mathrm{bwd}}.
\label{eq:delta-ratio}
\end{equation}
It remains to derive specific terms for $R_{\mathrm{fwd}}$ and $R_{\mathrm{bwd}}$ in ANMs.

\textbf{Forward direction ($X\to Y$)}: if $X$ is observed, the optimal residual is $N$, so
$v_{0,\mathrm{fwd}}=\Var(N)$. If $X$ is missing, the optimal predictor is the mean of $Y$,
so $v_{1,\mathrm{fwd}}=\Var(Y)=\Var(f(X))+\Var(N)$ (since $N\indep X$).
Thus
\begin{equation}
R_{\mathrm{fwd}}
= \frac{v_{1,\mathrm{fwd}}}{v_{0,\mathrm{fwd}}}
= \frac{\Var(f(X))+\Var(N)}{\Var(N)}
= 1+\frac{\Var(f(X))}{\Var(N)}.
\end{equation}

\textbf{Backward direction ($Y\to X$)}: if $Y$ is observed, the optimal residual variance is
$v_{0,\mathrm{bwd}}=\E[\Var(X\mid Y)]$. If $Y$ is missing, the best predictor is the mean of $X$,
so $v_{1,\mathrm{bwd}}=\Var(X)=\E[\Var(X\mid Y)]+\Var(\E[X\mid Y])$. Hence
\begin{equation}
R_{\mathrm{bwd}}
= \frac{v_{1,\mathrm{bwd}}}{v_{0,\mathrm{bwd}}}
= \frac{\E[\Var(X\mid Y)]+\Var(\E[X\mid Y])}{\E[\Var(X\mid Y)]} = 1 + \frac{\Var(\E[X\mid Y])}{\E[\Var(X\mid Y)]}.
\end{equation}
Therefore $R_{\mathrm{fwd}} > R_{\mathrm{bwd}}$ is equivalent to
\begin{equation}
1+\frac{\Var(f(X))}{\Var(N)} > 1 + \frac{\Var(\E[X\mid Y])}{\E[\Var(X\mid Y)]}
\quad\Longleftrightarrow\quad
\frac{\Var(f(X))}{\Var(N)} > \frac{\Var(\E[X\mid Y])}{\E[\Var(X\mid Y)]}.
\end{equation}
Using Eq.~\eqref{eq:delta-ratio}, we can thus conclude that the likelihood gap is amplified
iff the forward signal-to-noise ratio exceeds the backward ratio, which proves the claim.
\end{proof}

 \section{Model and Training Details} 
\label{app:model-details}
This section summarizes the model components and hyperparameters used in the experiments.
 
We implement the attention mask as an additive bias to the attention scores before applying the softmax.
Forward, we use $\attention_{ij}\cdot \beta$, where $\beta$ is annealed from $-5.0$ to $-20.0$ during training; backward, we use the sigmoid form $\hat{\attention}_{ij}=\sigma((s_i - s_j)/t)$
where $t$ is annealed from $1.0$ to $0.1$.
\begin{itemize}
    \item We embed every cell to a vector of dimension $d_{\text{embedding}}=128$.
    \item Transformer blocks: $6$ repetitions for inducing order and $4$ repetitions for cell prediction. We use Pytorch's \texttt{nn.TransformerEncoderLayer} with GELU activations,
    feed-forward dimension of $2 \cdot d_{\text{embedding}}$, dropout $0.1$ and prenorm.
    \item Optimization: Adam with base learning rate $2\times10^{-4}$, warmup ratio $0.03$ and weight decay.
    \item Batch size $4$ with 25000 training steps for a total of 100,000 datasets seen. Datasets are generated with $5-10$ variables each and $512-1024$ samples.
    \item Masking during training: random masking with \texttt{masking\_percentage}=0.2.
\end{itemize}

  \section{Detailed Experimental Setups}
  \newcommand{\orderhat}{\hatorder}
  \newcommand{\cond}{c}
  \newcommand{\obs}{{c_0}}
  \newcommand{\orderc}[1]{\orderhat_{#1}}
  \newcommand{\ordercond}{\orderhat_{\cond}}
  \newcommand{\orderobs}{\orderhat_{\obs}} 
  \newcommand{\Gobs}{\mathit{G}_{\obs}} 
  
\subsection{Synthetic Data Generation}
\label{app:synthetic-data}


For each synthetic example, we sample the number of variables
$\nfeat \sim \mathrm{Unif_{Int}}(5,10)$ and construct a directed acyclic graph (DAG) $G=(V,E)$ over $V=\{1,\dots,\nfeat\}$ by first drawing a random topological ordering and then sampling parent sets subject to an in-degree constraint
$
|\pa_i| \in \{1,2,3\}.
$

Given $G$, we generate observations according to a nonlinear structural causal model (SCM). For each node $i=1,\ldots,\nfeat$, variables are generated as
\[
\featvar_i \;=\; f_i(\featvar_{\pa_i}) + \varepsilon_i,
\]
where the noise terms are mutually independent with
$\sigma_i \sim \mathrm{Unif}(0.05,0.25)$ and
$\varepsilon_i \sim \mathcal{N}(0,\sigma_i^2)$.

For root nodes ($\pa_i=\emptyset$), we sample $X_i$ from a mixture of simple base distributions: with probability $1/2$, a scaled Gaussian $X_i = s Z$ where $Z\sim\mathcal{N}(0,1)$ and $s\sim\mathrm{Unif}(0.5,2.0)$; otherwise, a scaled uniform $X_i=(U-\tfrac12)a$ where $U\sim\mathrm{Unif}(0,1)$ and $a\sim\mathrm{Unif}(1.0,5.0)$. Values are lightly clipped to avoid extreme outliers.

In the non-additive setting, each structural function $f_i$ is a smooth nonlinear function of all parents jointly, sampled from an RBF Gaussian process prior approximated by random Fourier features (RFF). Let $x\in\mathbb{R}^{P}$ denote the standardized parent vector, where $P=|\pa_i|$. We sample a lengthscale
 $ \ell \sim \exp\!\big(\mathrm{Unif}(\log 0.3,\log 1.0)\big),$
and define
\[
\phi(x) \;=\; \sqrt{\frac{2}{H}}\cos\!\big(W^\top x + b\big),
\qquad
W\in\mathbb{R}^{P\times H},\; b\in\mathbb{R}^{H},
\]
with $H=256$ features,
$W_{jk}\sim\mathcal{N}(0,\ell^{-2})$, and $b_k\sim\mathrm{Unif}(0,2\pi)$. With weights $a\sim\mathcal{N}(0,I_H)$, we set
$
f_i(x) \;=\; a^\top \phi(x).$
This yields smooth nonlinear mechanisms that are explicitly non-additive across parents.

Finally, for each DAG we draw batches with sequence length $r \sim \mathrm{Unif_{Int}}(512,1024)$ from the resulting SCM.

\subsection{Real-World Datasets}
\label{app:datasets}

\paragraph{Sachs Datasets.} 
We evaluate our method on the protein signaling dataset of \cite{sachs:05:sachs}. The data consist of single-cell measurements of intracellular signaling proteins in primary human CD4$^{+}$ T cells, obtained via multiparameter flow cytometry.

The dataset contains measurements of 11 phosphoproteins and phospholipids (\texttt{Raf}, \texttt{Mek}, \texttt{Erk}, \texttt{Plcg}, \texttt{PIP2}, \texttt{PIP3}, \texttt{PKC}, \texttt{PKA}, \texttt{Akt}, \texttt{Jnk}, \texttt{P38}) under multiple experimental conditions, 
where as is common we disregard the condition \texttt{b2camp} and treat the condition \texttt{cd3cd28} as the main reference condition, shown in the main experiments (Figure \ref{fig:order}b). The conditions correspond to different combinations of external stimulation (e.g., TCR/CD28 activation and ICAM-2 co-stimulation) 
and inhibition (e.g., of MEK, PKC, PI3K, and Akt, as well as perturbations of phosphoinositide signaling).

For the evaluations, we use a reduced Sachs signaling graph over the measured variables    consistent with commonly used benchmark variants in prior causal discovery work \citep[e.g.,][]{zheng:18:notears}. The used network closely follows the validated Sachs benchmark network distributed within the  \texttt{bnlearn} R package, replacing the   edges Erk $\to$ Akt and PKC$\to$ PKA with the biologically established reduced-pathway edges PIP3 $\to$ Akt and PIP2 $\to$PKC, both of which  \cite{sachs:05:sachs} identify as  signaling influences. We show this graph in Figure  \ref{fig:sachs-consensuses}, depicting the consensus graph we use in the experiments (a) and the activating resp. inhibiting perturbations (b). We show the correspondence of these interventions with the available experimental conditions in Table \ref{tab:sachs-consensus-interventions}.

\begin{figure}[h!]
		\begin{subfigure}{.5\textwidth}
		\centering
		\scalebox{0.7}{
			\begin{tikzpicture}[
				prot/.style={draw=tcss-slate3, circle, minimum size=8mm, inner sep=1pt},
				iv/.style={
					draw=white,
					fill=white,white,
					circle,
					dashed,
					thick,
					minimum size=7mm,
					inner sep=1pt,
					font=\bfseries
				},
				>=Latex
				]
				
				\node[prot] (pkc) at (0,4) {PKC};
				
				\node[prot] (pka) at (0,2.4) {PKA};
				\node[prot] (raf) at (1.7,2.4) {Raf};
				
				\node[prot] (jnk) at (-1.7,2.4) {Jnk};
				\node[prot] (p38) at (-1.7,1.1) {P38};
				
				\node[prot] (plcg) at (-2.4,0.2) {Plc${}_{\gamma}$};
				
				\node[prot] (pip2) at (-3.3,-1.0) {PIP2};
				\node[prot] (pip3) at (-1.7,-1.0) {PIP3};
				\node[prot] (akt) at (0,-1.0) {Akt};
				\node[prot] (erk) at (1.7,-1.0) {Erk};
				
				\node[prot] (mek) at (1.7,1.1) {Mek};
				
				\path[-{Latex[length=2mm,width=1mm]}, line width=.9pt, tcss-slate3]
				(raf) edge[] (mek)
				(mek) edge[] (erk)
				(plcg) edge[bend right=10] (pip2)
				(plcg) edge[bend left=10] (pip3)
				(pip3) edge[] (pip2)
				(pip3) edge[] (akt)
				(pip2) edge[bend left=35] (pkc)
				(pkc) edge[bend left=12] (raf)
				(pkc) edge[] (mek)
				(pkc) edge[bend right=12] (jnk)
				(pkc) edge[] (p38)
				(pka) edge[] (raf)
				(pka) edge[] (mek)
				(pka) edge[] (erk)
				(pka) edge[] (akt)
				(pka) edge[] (jnk)
				(pka) edge[] (p38)
				;
				
				\node[iv] (i1) at (-1.45,5.45) {1};
				\node[iv] (i2) at (1.45,5.45) {2};
				\node[iv] (i3) at (-1.55,4.15) {3}; 
				\node[iv] (i4) at (1.95,4.00) {4};
				\node[iv] (i5) at (0,-2.35) {5};
				\node[iv] (i6) at (3.25,1.1) {6};
				\node[iv] (i7) at (-1.7,-2.35) {7};
				\node[iv] (i8) at (-3.30,-2.35) {8};
				
			\end{tikzpicture}
		} 
		\caption{\textbf{Causal DAG.}}
		\label{fig:sachs-consensus-graph}
	\end{subfigure}
\begin{subfigure}{.5\textwidth} 
	\centering
\scalebox{0.7}{
	\begin{tikzpicture}[
		prot/.style={draw=tcss-slate3, circle, minimum size=8mm, inner sep=1pt},
		act/.style={draw=pr-color1a!60!black, fill=pr-color1a!10, circle, dashed, thick,
			minimum size=7mm, inner sep=1pt, font=\bfseries},
		inh/.style={draw=pr-color1b!70!black, fill=pr-color1b!10, circle, dashed, thick,
			minimum size=7mm, inner sep=1pt, font=\bfseries},
		>=Latex
		]
		 
		\node[prot] (pkc) at (0,4) {PKC};
		
		\node[prot] (pka) at (0,2.4) {PKA};
		\node[prot] (raf) at (1.7,2.4) {Raf};
		
		\node[prot] (jnk) at (-1.7,2.4) {Jnk};
		\node[prot] (p38) at (-1.7,1.1) {P38};
		
		\node[prot] (plcg) at (-2.4,0.2) {Plc${}_{\gamma}$};
		
		\node[prot] (pip2) at (-3.3,-1.0) {PIP2};
		\node[prot] (pip3) at (-1.7,-1.0) {PIP3};
		\node[prot] (akt) at (0,-1.0) {Akt};
		\node[prot] (erk) at (1.7,-1.0) {Erk};
		
		\node[prot] (mek) at (1.7,1.1) {Mek};
		 
		\path[-{Latex[length=2mm,width=1mm]}, line width=.9pt, tcss-slate3]
		(raf) edge[] (mek)
		(mek) edge[] (erk)
		(plcg) edge[bend right=10] (pip2)
		(plcg) edge[bend left=10] (pip3)
		(pip3) edge[] (pip2)
		(pip3) edge[] (akt)
		(pip2) edge[bend left=35] (pkc)
		(pkc) edge[bend left=12] (raf)
		(pkc) edge[] (mek)
		(pkc) edge[bend right=12] (jnk)
		(pkc) edge[] (p38)
		(pka) edge[] (raf)
		(pka) edge[] (mek)
		(pka) edge[] (erk)
		(pka) edge[] (akt)
		(pka) edge[] (jnk)
		(pka) edge[] (p38)
		;
		 
		\node[act] (i1) at (-1.45,5.45) {1};
		\node[act] (i2) at (1.45,5.45) {2};
		\node[act] (i3) at (-1.55,4.15) {3};
		\node[inh] (i4) at (1.95,4.00) {4};
		\node[inh] (i5) at (0,-2.35) {5};
		\node[inh] (i6) at (3.25,1.1) {6};
		\node[inh] (i7) at (-1.7,-2.35) {7};
		\node[inh] (i8) at (-3.30,-2.35) {8};
		 
		\path[-{Latex[length=2mm,width=1mm]}, dashed, line width=0.8pt, pr-color1a!60!black]
		(i1) edge[bend right=10] (pkc)
		(i2) edge[bend left=10] (pkc)
		(i3) edge[] (pka);
		
		\path[-{Latex[length=2mm,width=1mm]}, dashed, line width=0.8pt, pr-color1b!70!black]
		(i4) edge[] (pkc)
		(i5) edge[] (akt)
		(i6) edge[] (mek)
		(i7) edge[] (pip3)
		(i7) edge[bend right=10] (akt)
		(i8) edge[] (pip2)
		(i8) edge[bend right=10] (pip3);
		
	\end{tikzpicture}
}
\caption{\textbf{Intervention Targets.}}
\label{fig:sachs-consensus-interventions}
\end{subfigure} 
\caption{\textbf{Consensus Causal Structure for the Data by \cite{sachs:05:sachs}.} We show the causal DAG that we consider in the evaluations (a), as well as the effects of perturbations applied in each experimental condition (b). Dashed colored nodes correspond to activators (green) and inhibitors (blue), cf. Table \ref{tab:sachs-consensus-interventions}. }
\label{fig:sachs-consensuses}
\end{figure}
\begin{table}[h!]
	\caption{\textbf{Interventional Conditions in the Data by \cite{sachs:05:sachs}.} Listed is each interventional condition and the type of perturbation applied therein. 
	}
	\label{tab:sachs-consensus-interventions}
	\vspace*{1em}
	\centering
	\normalsize
	\begin{tabular}{llll}
		\toprule
		No. & Type & Condition & Intervention\\
		\midrule
		\textcolor{pr-color1a}{1} & Activator & \texttt{cd3cd28} & TCR/CD28 stimulation \\
		\textcolor{pr-color1a}{2}  & Activator & \texttt{cd3cd28\_icam2} & TCR/CD28 + ICAM-2 \\
		\textcolor{pr-color1a}{3} & Activator & \texttt{pma} & PKA/PKC-side activation proxy \\
		\textcolor{pr-color1b}{4} & Inhibitor & \texttt{cd3cd28\_g0076} & PKC inhibition \\
		\textcolor{pr-color1b}{5} & Inhibitor & \texttt{cd3cd28\_aktinhib} & Akt inhibition \\
		\textcolor{pr-color1b}{6} & Inhibitor & \texttt{cd3cd28\_u0126} & MEK inhibition \\
		\textcolor{pr-color1b}{7} & Inhibitor & \texttt{cd3cd28\_ly} & PI3K/PIP3/Akt-axis inhibition \\
		\textcolor{pr-color1b}{8} & Inhibitor & \texttt{cd3cd28\_psitect} & PIP2/PIP3-axis perturbation \\
		\bottomrule
	\end{tabular}
\end{table} 

\paragraph{CC18 and CTR23.} For reference,   we include statistics on the real-world datasets CC18 and CTR23 used in Figure  \ref{fig:imputation_results} 
in Table \ref{tab:cc18-ctr23-datasets}.
\begin{longtable}{llrrl}
	\caption{\textbf{Real-World Datasets for Predictive Tasks.} CC18 and CTR23 datasets used in our experiments.}
	\label{tab:cc18-ctr23-datasets} \\
	\toprule
	suite & dataset & rows & cols & target\_name \\
	\midrule
	\endfirsthead
	
	\toprule
	suite & dataset & rows & cols & target\_name \\
	\midrule
	\endhead
	
	\midrule
	\multicolumn{5}{r}{Continued on next page} \\
	\midrule
	\endfoot
	
	\bottomrule
	\endlastfoot
CC18 & GesturePhaseSegmentationProcessed & 9873 & 33 & Phase \\
CC18 & MiceProtein & 552 & 78 & class \\
CC18 & analcatdata\_authorship & 841 & 71 & Author \\
CC18 & balance-scale & 625 & 5 & class \\
CC18 & banknote-authentication & 1372 & 5 & Class \\
CC18 & blood-transfusion-service-center & 748 & 5 & Class \\
CC18 & breast-w & 683 & 10 & Class \\
CC18 & churn & 5000 & 17 & class \\
CC18 & climate-model-simulation-crashes & 540 & 19 & outcome \\
CC18 & diabetes & 768 & 9 & class \\
CC18 & electricity & 45312 & 8 & class \\
CC18 & first-order-theorem-proving & 6118 & 52 & Class \\
CC18 & ilpd & 583 & 10 & Class \\
CC18 & jm1 & 10880 & 22 & defects \\
CC18 & jungle\_chess\_2pcs\_raw\_endgame\_complete & 44819 & 7 & class \\
CC18 & kc1 & 2109 & 22 & defects \\
CC18 & kc2 & 522 & 22 & problems \\
CC18 & mfeat-morphological & 2000 & 7 & class \\
CC18 & ozone-level-8hr & 2534 & 73 & Class \\
CC18 & pc1 & 1109 & 22 & defects \\
CC18 & pc3 & 1563 & 38 & c \\
CC18 & pc4 & 1458 & 38 & c \\
CC18 & phoneme & 5404 & 6 & Class \\
CC18 & qsar-biodeg & 1055 & 42 & Class \\
CC18 & satimage & 6430 & 37 & class \\
CC18 & segment & 2310 & 17 & class \\
CC18 & spambase & 4601 & 58 & class \\
CC18 & steel-plates-fault & 1941 & 28 & target \\
CC18 & vehicle & 846 & 19 & Class \\
CC18 & vowel & 990 & 11 & Class \\
CC18 & wall-robot-navigation & 5456 & 25 & Class \\
CC18 & wdbc & 569 & 31 & Class \\
CC18 & wilt & 4839 & 6 & class \\
CTR23 & QSAR\_fish\_toxicity & 908 & 7 & LC50 \\
CTR23 & abalone & 4177 & 8 & rings \\
CTR23 & airfoil\_self\_noise & 1503 & 6 & sound\_pressure \\
CTR23 & california\_housing & 20640 & 9 & medianHouseValue \\
CTR23 & cars & 804 & 18 & Price \\
CTR23 & concrete\_compressive\_strength & 1030 & 9 & strength \\
CTR23 & cpu\_activity & 8192 & 22 & usr \\
CTR23 & energy\_efficiency & 768 & 9 & heating\_load \\
CTR23 & fifa & 19178 & 28 & wage\_eur \\
CTR23 & forest\_fires & 517 & 11 & area \\
CTR23 & geographical\_origin\_of\_music & 1059 & 117 & latitude \\
CTR23 & grid\_stability & 10000 & 13 & stab \\
CTR23 & kin8nm & 8192 & 9 & y \\
CTR23 & kings\_county & 21613 & 18 & price \\
CTR23 & miami\_housing & 13932 & 16 & SALE\_PRC \\
CTR23 & naval\_propulsion\_plant & 11934 & 15 & gt\_compressor\_decay\_  \\
 &  &   &   & state\_coefficient \\
CTR23 & physiochemical\_protein & 45730 & 10 & RMSD \\
CTR23 & pumadyn32nh & 8192 & 33 & thetadd6 \\
CTR23 & red\_wine & 1599 & 12 & quality \\
CTR23 & sarcos & 48933 & 22 & V22 \\
CTR23 & space\_ga & 3107 & 7 & ln\_votes\_pop \\
CTR23 & superconductivity & 21263 & 82 & critical\_temp \\
CTR23 & white\_wine & 4898 & 12 & quality \\ 
\end{longtable}

\section{Extended Experiments} 
\subsection{Causal Order Inference on Synthetic Data (Fig. \ref{fig:order}a)}
Our baselines for causal order discovery include topological sorting methods that use regression together with regularized likelihood or AMC-based scores, \cam{} \citep{buhlmann:14:cam} and  \topic{} \citep{xu:25:topic}, or perform sorting based on the score of the data distribution using score matching, namely   \scoremethod{} \citep{rolland:22:score} and extensions  \dasmethod{} \citep{montagna:23:das} and \nogam{} \citep{montagna:23:nogam}, and finally the neural method \notears{} \citep{zheng:18:notears}. We use their publicly available implementations, among others included in the \texttt{causal-learn} resp. \texttt{do-discover} Python packages, using  default  parameter choices as suggested in the original implementations.   A suite of methods, such as the classical PC algorithm \citep{spirtes2001causation}, return only partially oriented structures (CPDAGs or PDAGs) that do not correspond to  a unique causal order. Given this and the fact that they are often  outperformed by the methods we consider here in terms of other structural metrics  \citep[e.g.,][]{xu:25:topic}, hence we omit them from   comparison.
In the evaluation, we use batch sizes of $|B|=4$ for $n_B=30$ batches, resulting in 120 runs per method.   All methods return a fully oriented causal DAG with the exception of \avici, which outputs edge probabilities from which we extract a causal order by thresholding.
 
\begin{figure}
	\centering 
	
	\begin{minipage}[t]{0.48\textwidth} 
		\begin{tikzpicture}
			\begin{axis}[
				pretty boxplot,
				cycle list name=taborder-box,
				height=4cm,
				width=7cm,
				ylabel={$\divtop$},
				title={\textsc{Nonlinear-Additive}},
				ymax=1,
				pretty labelshift,
				xtick={1,2,3,4,5,6,7,8,9},
				xticklabels={\topic,\cam,\textbf{\ourmethod},\avici,\nogam,\scoremethod,\dasmethod,\notears,\rtwosort},
				major tick length=2pt,
				xticklabel style={rotate=25,anchor=east,font=\footnotesize},
				xtick style={draw=black,yshift=-3pt},
				]
				\foreach \col in {topic,cam,non-additive-unregularized-gp-small,avici-0-5,nogam,score,das,notears,rtwosort}{
					\addplot table[y=\col] {figs/texdata/order_box_additive_gp_topological_divergence.tsv};
				}
			\end{axis}
		\end{tikzpicture} 
	\end{minipage}
	\begin{minipage}[t]{0.48\textwidth} 
		\begin{tikzpicture}
			\begin{axis}[
				pretty boxplot,
				cycle list name=taborder-box,
				height=4cm,
				width=7cm,
				ylabel={ },
				title={\textsc{Nonlinear-Nonadditive}},
				ymax=1,
				pretty labelshift,
				xtick={1,2,3,4,5,6,7,8,9},
				xticklabels={\topic,\cam,\textbf{\ourmethod},\avici,\nogam,\scoremethod,\dasmethod,\notears,\rtwosort},
				major tick length=2pt,
				xticklabel style={rotate=25,anchor=east,font=\footnotesize},
				xtick style={draw=black,yshift=-3pt},
				]
				\foreach \col in {topic,cam,non-additive-unregularized-gp-small,avici-0-5,nogam,score,das,notears,rtwosort}{
					\addplot table[y=\col] {figs/texdata/order_box_non_additive_gp_topological_divergence.tsv};
				}
			\end{axis}
		\end{tikzpicture} 
	\end{minipage}
	
	\caption{\textbf{Causal Order Inference  (extends Fig. \ref{fig:order}a).}   Shown is the average topological divergence ($\divtop$, lower is better) between ground truth and estimated causal orders on synthetic datasets, generated from a nonlinear additive noise model with functions drawn from a Gaussian process, with both a variant where functional mechanisms are additive (left), respectively non-additive (right), in the causal parents.}
	\label{fig:full_causal_plot} 
\end{figure}
We assess the causal order learning using the topological divergence $\divtop$,  which measures the discrepancy between the predicted causal order $\hat \pi$ and the ground-truth order implied by the true DAG $G^\star$  by counting edge disagreements 
$\divtop(\hat \pi,G^\star) = \sum_{i=1}^{\nfeat} \sum_{j:\pi(i)\geq \pi(j)} \mathbb I[(i,j)\in G^\star]$,
with lower values indicating better alignment with the true causal ordering. 

We plot the topological divergence in Figure~\ref{fig:full_causal_plot}, including both additive mechanisms as well as non-additive mechanisms.

 \subsection{Causal Order Inference on Real-World  Data (Fig. \ref{fig:order}b)} 
 \label{app:additional-sachs-results}
  
 We evaluate causal order discovery on the Sachs protein-signaling benchmark \citep{sachs:05:sachs}, using the   consensus graph in Figure \ref{fig:sachs-consensuses}. We focus on the \texttt{cd3cd28} condition as the primary reference condition, since it corresponds to TCR/CD28 stimulation without additional inhibitors and is commonly used as the closest proxy to an observational setting. We compare \ourmethod{} again with the order-based and graph-based causal discovery baselines  \cam, \topic, \scoremethod, \dasmethod, \nogam, \notears, \rtwosort, and \avici.

 In addition to the reference condition reported in  Figure~\ref{fig:order}b, the Sachs dataset contains multiple interventional conditions corresponding to targeted perturbations of signaling pathways. We therefore also evaluate \ourmethod{} across all  Sachs conditions, besides \texttt{b2camp} which we exclude from the analysis as in previous works \citep{mooij2013joint}. The full results are shown 
  in the heatmap in Figure \ref{fig:order-heatmap-sachs-appendix}. Across most conditions, \ourmethod{} obtains topological divergences in the range $0.12$--$0.29$, indicating that the inferred orders remain broadly compatible with the reference signaling graph under moderate interventions. The main exception is the PKC inhibition condition \texttt{cd3cd28\_g0076}, where the divergence is substantially larger. This is biologically plausible, since \texttt{PKC} acts as a hub in the reduced graph and its perturbation can induce widespread changes across downstream MAPK and stress-signaling branches.
 
 Overall, these results suggest that \ourmethod{} recovers meaningful causal order information not only on synthetic data, but also on a real biological benchmark with known intervention structure. In Section \ref{app:additional-sachs-results-extended}, we further analyze how individual interventions affect the inferred orders through rank shifts and pairwise order flips. 
 \begin{figure}  

\begin{tikzpicture}
	\begin{axis}[
		width=12cm,
		height=8cm,
		title={Top. divergence ($\divtop,  \downarrow$)  across experimental conditions \citep{sachs:05:sachs}},
		xlabel={Interventional Condition},
		ylabel={Causal (Order) Discovery Method},
		xtick={0,1,2,3,4,5,6,7,8,9,10,11,12,13},
		xticklabels={\texttt{cd3cd28},\texttt{cd3cd28\_aktinhib},\texttt{cd3cd28\_g0076},\texttt{cd3cd28\_icam2},\texttt{cd3cd28\_ly},\texttt{cd3cd28\_psitect},\texttt{cd3cd28\_u0126},\texttt{cd3cd28icam2\_aktinhib},\texttt{cd3cd28icam2\_g0076},\texttt{cd3cd28icam2\_ly},\texttt{cd3cd28icam2\_psit},\texttt{cd3cd28icam2\_u0126},\texttt{pma},mean},
		xticklabel style={rotate=45, anchor=east, font=\scriptsize},
		ytick={0,1,2,3,4,5,6,7,8},
		yticklabels={\ourmethod,\notears,\topic,\rtwosort,\cam,\scoremethod,\nogam,\dasmethod,\avici},
		yticklabel style={font=\scriptsize},
		enlargelimits=false,
		axis on top,
		axis lines=left,
		axis line style={draw=none},
		colorbar,
		point meta min=0,
		point meta max=0.823529,
		colormap={blueseq}{rgb255(0cm)=(245,248,252);rgb255(2cm)=(200,220,232);rgb255(4cm)=(0,125,118);rgb255(6cm)=(14,77,104)},
		colorbar style={ylabel={$\divtop$}},
		]
		
		\addplot[
		matrix plot*,
		mesh/cols=14,
		point meta=explicit,
		] table[x=x, y=y, meta=z] {figs/texdata/sachs_method_context_topological_divergence_heatmap.tsv};
		
		\addplot[
		only marks,
		mark=none,
		nodes near coords,
		point meta=explicit symbolic,
		every node near coord/.append style={font=\tiny, text=black, anchor=center},
		] table[x=x, y=y, meta=label] {figs/texdata/sachs_method_context_topological_divergence_heatmap.tsv};
		
	\end{axis}
\end{tikzpicture}
\caption{\textbf{Causal Order Inference in Real-World Data (extends Fig. \ref{fig:order}b).} Shown is the average topological divergence ($\divtop$, lower is better) between ground truth and estimated causal orders on the real-world causal discovery benchmark by \cite{sachs:05:sachs}. For each available dataset $\cond$ (vertical) corresponding to one experimental condition, we show each method (horizontal) and how the causal order $\ordercond$ it discovers in  dataset $\cond$ differs from the consensus graph $\Gobs$. The consensus graph  $\Gobs$ represents a summary of causal relationships that likely best represents the near-observational condition \texttt{cd3cd28}.  Figure  \ref{fig:order}b reports the corresponding bar-plot results for this condition.
}
\label{fig:order-heatmap-sachs-appendix}
\end{figure}
 

\subsection{Imputation Experiments  (Fig. \ref{fig:imputation_results})} 
\label{app:additional-imputation-results}
We use a variety of baselines for missing value imputation:
conventional methods such as column mean (\meanimp), \knnimp, and \softimpute \citep{hastie2015matrix} the iterative approaches \iceimp, \mice \citep{van2011mice}, and \missforest \citep{stekhoven2012missforest},
neural baselines \gain \citep{yoon2018gain} and \miwae \citep{mattei2019miwae}, and \hyperimpute \citep{jarrett2022hyperimpute}, which automatically selects the best imputation model per feature.
Lastly, we compare to \tabimpute \citep{feitelberg2025tabimpute}, a tabular foundation model specifically designed for missing value imputation.
Due to memory constraints we use maximum batch sizes of $512$ data points for \tabimpute and limit the maximum number of variables to $15$,
as larger tables caused out-of-memory errors on our $40$GB GPU hardware.
We evaluate \ourmethod,
which is fine-tuned on the training set, as well as the pretrained \ourmethod{} without fine-tuning (denoted \ourmethod-Pre).

\pgfplotsset{
  rankplot/.style={
    width=\linewidth,
    height=3.5cm,
    ymin=1,
    ymax=11,
    y dir=reverse,
    xtick={0.1,0.2,0.3,0.4,0.5},
    xticklabel style={/pgf/number format/fixed},
    xlabel={Missing rate},
    smooth,
    ticklabel style={font=\scriptsize},
    label style={font=\small},
  },
}
\begin{figure}
    \centering

    \newcommand{\rankdir}{expres/imputation/rank_by_method_complete}
    
    \begin{subfigure}[t]{0.24\linewidth}
        \centering
        \begin{tikzpicture}
            \begin{axis}[
                rankplot,
                pretty line,
                pretty labelshift,
                cycle list name=topic-line,
                ylabel={Avg.~Rank},
                xlabel={Missing rate},
                legend style={
                    at={(-1,1.8)},
                    anchor=north west,
                    font=\scriptsize,
                    legend columns=6,
                },
            ]
            \pgfplotsinvokeforeach{finetuned-foundation,hyperimpute,missforest,ice,gain,mice,softimpute,knn,mean,miwae,tabimpute, pretrained-foundation}{
                \addplot+
                table[
                    x=missing_rate,
                    y=imputation_rank,
                    col sep=comma,
                ]{\rankdir/CTR23_#1.csv};
            }
            \legend{\ourmethod, \hyperimpute, \missforest, \iceimp, \gain, \mice, \softimpute,\knnimp, \meanimp, \miwae, \tabimpute, \ourmethod-Pre}
            \end{axis}
        \end{tikzpicture}
        \caption{CTR23 Imputation Rank}
    \end{subfigure}
    \hfill
    \begin{subfigure}[t]{0.24\linewidth}
        \centering
        \begin{tikzpicture}
            \begin{axis}[
                rankplot,
                pretty line,
                pretty labelshift,
                cycle list name=topic-line,
                ylabel={Avg.~Rank},
                xlabel={Missing rate},
            ]

            \pgfplotsinvokeforeach{finetuned-foundation,hyperimpute,missforest,ice,gain,mice,softimpute,knn,mean,miwae,tabimpute, pretrained-foundation}{
                \addplot+
                table[
                    x=missing_rate,
                    y=prediction_rank,
                    cycle list name=color list,
                    col sep=comma,
                ]{\rankdir/CTR23_#1.csv};
            }

            \end{axis}
        \end{tikzpicture}
        \caption{CTR23 Prediction Rank}
    \end{subfigure}
    \hfill
    \begin{subfigure}[t]{0.24\linewidth}
        \centering
        \begin{tikzpicture}
            \begin{axis}[
                rankplot,
                pretty line,
                pretty labelshift,
                cycle list name=topic-line,
                ylabel={Avg.~Rank},
                xlabel={Missing rate},
                legend pos=north east,
            ]
            \pgfplotsinvokeforeach{finetuned-foundation,hyperimpute,missforest,ice,gain,mice,softimpute,knn,mean,miwae,tabimpute, pretrained-foundation}{%
                \addplot+
                table[
                    x=missing_rate,
                    y=imputation_rank,
                    cycle list name=color list,
                    col sep=comma,
                ]{\rankdir/CC18_#1.csv};
            }
            \end{axis}
        \end{tikzpicture}
        \caption{CC18 Imputation Rank}
    \end{subfigure}
    \hfill
    \begin{subfigure}[t]{0.24\linewidth}
        \centering
        \begin{tikzpicture}
            \begin{axis}[
                rankplot,
                pretty line,
                pretty labelshift,
                cycle list name=topic-line,
                ylabel={Avg.~Rank},
                xlabel={Missing rate},
            ]

            \pgfplotsinvokeforeach{finetuned-foundation,hyperimpute,missforest,ice,gain,mice,softimpute,knn,mean,miwae,tabimpute, pretrained-foundation}{%
                \addplot+
                table[
                    x=missing_rate,
                    y=prediction_rank,
                    cycle list name=color list,
                    col sep=comma,
                ]{\rankdir/CC18_#1.csv};
            }

            \end{axis}
        \end{tikzpicture}
        \caption{CC18 Prediction Rank}
    \end{subfigure}
    \caption{\textbf{Predictive Performance  (extends Fig. \ref{fig:imputation_results}).} Shown is the average rank of imputation (a, c) and downstream prediction (b, d) performance on the CTR23  and CC18  datasets of all tested methods. }
    \label{fig:full_imputation_plot}
\end{figure}
We plot the average rank of imputation and downstream prediction performance across all datasets and missingness rates in Figure~\ref{fig:full_imputation_plot}.
We observe that \ourmethod{} is the most accurate imputation method for $\geq 40\%$ missingness, outperforming all baselines including \tabimpute.
\ourmethod-Pre does not perform competitively, indicating that fine-tuning or alternatively training on different synthetic data is necessary to achieve strong imputation performance.

\subsection{Intervention-Robust Prediction (Fig. \ref{fig:three-variable-intervention-mse})}
For this experiment, we generate data from a three-variable chain $X \to Y \to Z$.
$X$ is sampled from a standard normal distribution.
$Y$ is generated from an additive noise model $Y = f(X) + N_Y$ where $f$ is a GP sample with RBF kernel or spline function and $N_Y\sim\mathcal{N}(0,0.1^2)$.
$Z$ is generated similarly as $Z = g(Y) + N_Z$ where $g$ is an independent GP/spline and $N_Z\sim\mathcal{N}(0,0.1^2)$.
We generate training datasets with $5000$ samples and test datasets with $2500$ samples. 
We use the default configuration of \tabpfn obtained via huggingface as well as the default XGBoost regressor with 100 estimators. 
We provide the full intervention MSE plots in Figure~\ref{fig:three-variable-intervention-mse-appendix}. 

\begin{figure*}[t]
  \centering
  \pgfplotstableread[col sep=tab]{figs/texdata/three_variable_intervention_mse_gp_hard.tsv}\datatablegphard
  \pgfplotstableread[col sep=tab]{figs/texdata/three_variable_intervention_mse_gp_mech_shift.tsv}\datatablegpmech
  \pgfplotstableread[col sep=tab]{figs/texdata/three_variable_intervention_mse_spline_hard.tsv}\datatablesplinehard
  \pgfplotstableread[col sep=tab]{figs/texdata/three_variable_intervention_mse_spline_mech_shift.tsv}\datatablesplinemech

  \begin{subfigure}{0.24\linewidth}
    \centering
    \begin{tikzpicture}
      \begin{axis}[
        pretty boxplot,
        width=\linewidth,
        height=3.1cm,
        title={GP, hard},
        ylabel={MSE},
        ymax=1,
        xtick={1,2,3,4,5,6},
        xticklabels={,i.i.d.,,,,Intervened,},
        legend columns=3,
        legend style={font=\scriptsize, at={(2.5,1.7)}, anchor=south},
      ]
      \addplot[{pr-color1a, fill=pr-color1a!40}] table[y=cfm_noniv] {\datatablegphard};
        \addlegendentry{\ourmethod}
        \addplot[{pr-color1b, fill=pr-color1b!40}] table[y=xgboost_noniv] {\datatablegphard};
        \addlegendentry{\xgboost}
        \addplot[{pr-color1c, fill=pr-color1c!40}] table[y=tabpfn_noniv] {\datatablegphard};
        \addlegendentry{\tabpfn}
        \addplot[{pr-color1a, fill=pr-color1a!40}] table[y=cfm_iv] {\datatablegphard};
        \addplot[pr-color1b, fill=pr-color1b!40] table[y=xgboost_iv] {\datatablegphard};
        \addplot[pr-color1c, fill=pr-color1c!40] table[y=tabpfn_iv] {\datatablegphard};

      \end{axis}
    \end{tikzpicture}
  \end{subfigure}
  \hfill
  \begin{subfigure}{0.24\linewidth}
    \centering
    \begin{tikzpicture}
      \begin{axis}[
        pretty boxplot,
        width=\linewidth,
        height=3.1cm,
        title={GP, mech. shift},
        ymax=1,
        xtick={1,2,3,4,5,6},
        xticklabels={,i.i.d.,,,,Intervened,},
      ]
       \addplot[{pr-color1a, fill=pr-color1a!40}] table[y=cfm_noniv] {\datatablegpmech};
        \addplot[{pr-color1b, fill=pr-color1b!40}] table[y=xgboost_noniv] {\datatablegpmech};
        \addplot[{pr-color1c, fill=pr-color1c!40}] table[y=tabpfn_noniv] {\datatablegpmech};
        \addplot[{pr-color1a, fill=pr-color1a!40}] table[y=cfm_iv] {\datatablegpmech};
        \addplot[{pr-color1b, fill=pr-color1b!40}] table[y=xgboost_iv] {\datatablegpmech};
        \addplot[{pr-color1c, fill=pr-color1c!40}] table[y=tabpfn_iv] {\datatablegpmech};
      \end{axis}
    \end{tikzpicture}
  \end{subfigure}
  \hfill
  \begin{subfigure}{0.24\linewidth}
    \centering
    \begin{tikzpicture}
      \begin{axis}[
        pretty boxplot,
        width=\linewidth,
        height=3.1cm,
        title={Spline, hard},
        ymax=1,
        xtick={1,2,3,4,5,6},
        xticklabels={,i.i.d.,,,,Intervened,},
      ]
        \addplot[{pr-color1a, fill=pr-color1a!40}] table[y=cfm_noniv] {\datatablesplinehard};
        \addplot[{pr-color1b, fill=pr-color1b!40}] table[y=xgboost_noniv] {\datatablesplinehard};
        \addplot[{pr-color1c, fill=pr-color1c!40}] table[y=tabpfn_noniv] {\datatablesplinehard};
    
        \addplot[{pr-color1a, fill=pr-color1a!40}] table[y=cfm_iv] {\datatablesplinehard};
        \addplot[{pr-color1b, fill=pr-color1b!40}] table[y=xgboost_iv] {\datatablesplinehard};
        \addplot[{pr-color1c, fill=pr-color1c!40}] table[y=tabpfn_iv] {\datatablesplinehard};
      \end{axis}
    \end{tikzpicture}
  \end{subfigure}
  \hfill
  \begin{subfigure}{0.24\linewidth}
    \centering
    \begin{tikzpicture}
      \begin{axis}[
        pretty boxplot,
        width=\linewidth,
        height=3.1cm,
        title={Spline, mech. shift},
        ymax=1,
        xtick={1,2,3,4,5,6},
        xticklabels={,i.i.d.,,,,Intervened,},
      ]
      \addplot[{pr-color1a, fill=pr-color1a!40}] table[y=cfm_noniv] {\datatablesplinemech};
        \addplot[{pr-color1b, fill=pr-color1b!40}] table[y=xgboost_noniv] {\datatablesplinemech};
        \addplot[{pr-color1c, fill=pr-color1c!40}] table[y=tabpfn_noniv] {\datatablesplinemech};
        \addplot[{pr-color1a, fill=pr-color1a!40}] table[y=cfm_iv] {\datatablesplinemech};
        \addplot[{pr-color1b, fill=pr-color1b!40}] table[y=xgboost_iv] {\datatablesplinemech};
        \addplot[{pr-color1c, fill=pr-color1c!40}] table[y=tabpfn_iv] {\datatablesplinemech};

      \end{axis}
    \end{tikzpicture}
  \end{subfigure}

  \caption{\textbf{Intervention-Robust Prediction (extends Fig.\ref{fig:three-variable-intervention-mse}).}  Test MSE for predicting $Y$ in the three-variable chain, split by mechanism family (GP vs. spline) and intervention type.}
  \label{fig:three-variable-intervention-mse-appendix}
\end{figure*}

\section{Additional Experiments}

\subsection{Assumptions: Additive Noise Assumption}

We additionally investigate the robustness of \ourmethod{} to violations of the additive noise assumption used in the theoretical analysis. Besides standard additive Gaussian noise, we evaluate heteroskedastic noise, where the noise variance depends on the parent variables, and multiplicative noise, where noise scales the signal directly. 

Table~\ref{tab:noise_comparison} reports the resulting topological divergence $\divtop$. For illustration, we include both models trained on additive (CAM) functional models as well as non-additive ones. While the additive GP variant performs best under additive and heteroskedastic noise, the non-additive GP variant achieves the strongest performance under multiplicative noise and overall remains more stable across noise forms. These results suggest that \ourmethod{} is reasonably robust to moderate deviations from the classical additive noise setting.

\begin{table}[h]
	\caption{\textbf{Non-Additive Noise.} Topological divergence ($\divtop$) under different noise forms in the SCM.}
	\label{tab:noise_comparison} \vspace*{1em}
	\centering
	\begin{tabular}{lccc}
		\toprule
		Method & Additive Noise & Heteroskedastic Noise & Multiplicative Noise \\
		\midrule
		\ourmethod (additive GP)      & 0.202 & 0.190 & 0.232 \\
		\ourmethod (non-additive GP)  & 0.179 & 0.180 & 0.172 \\
		\bottomrule
	\end{tabular}
\end{table}

\subsection{Causal Order Inference: Scalability}
 \label{app:scalability} 

We also demonstrate the scalability of \ourmethod depending on (1) the number of features $\nfeat$  and (2) sequence length of each batch $\nsamples$, otherwise using the standard data generating settings in Section \ref{app:synthetic-data}.

For each configuration, we generate $25$ batches of size $4$ (a total of $100$ graphs) and report averaged metrics.
We vary either the number of variables $\nfeat \in \{10,20,50,100,200\}$ at fixed sequence length $\nsamples=1024$, or the sequence length $\nsamples \in \{128,256,512,1024,2048,4096\}$ at fixed $\nfeat=10$. All experiments were run on a single \texttt{NVIDIA A100} GPU (40GB VRAM) using \texttt{PyTorch} without distributed computation.

Figure~\ref{fig:additive-gp-scalability} compares \ourmethod{} against causal discovery baselines as the number of variables increases. We report topological divergence, evaluation runtime, and peak GPU memory usage. Some classical methods  could not be evaluated beyond moderate dimensionalities due to computational limitations and are therefore omitted for larger $\nfeat$, in particular \cam{} for  $\nfeat=50$ (Figure~\ref{fig:additive-gp-scalability} a).

\input{figs/appendix_scalability}

Table~\ref{tab:additive-gp-scalability-nvars} and Table~\ref{tab:additive-gp-scalability-seqlen} provide detailed scalability statistics for \ourmethod{}, including topological divergence, evaluation wall-clock time, and peak GPU memory usage.

\begin{table}[h] 
	\caption{\textbf{Scalability}. Shown are scalability metrics for \ourmethod on causal order discovery. This experiment uses  the same data generation and evaluation as in Figure \ref{fig:order} and varies the number of features respectively  samples. }
\begin{subtable}{\textwidth}
		\caption{\textbf{Scalability with Feature Size}. Shown are scalability metrics for \ourmethod when varying the number of features $\nfeat$ at fixed sample size $\nsamples=1024$. Values are mean $\pm$ 95\% CI.}
	\label{tab:additive-gp-scalability-nvars} \vspace*{1em}
	\centering
	\begin{tabular}{rrrr}
		\toprule
		$\nfeat$ & Top. div. ($\divtop$) & Eval wall (s) & Peak mem (GB) \\
		\midrule
		10  & $0.204096 \pm 0.0179684$ & $0.0458761 \pm 0.00885146$ & $0.151067 \pm 0.0000251987$ \\
		20  & $0.236118 \pm 0.0147771$ & $0.0772007 \pm 0.0137936$  & $0.293119 \pm 0.0000504157$ \\
		50  & $0.264468 \pm 0.0105586$ & $0.185035 \pm 0.0112221$   & $0.981129 \pm 0.000124748$ \\
		100 & $0.322675 \pm 0.00883721$& $0.321426 \pm 0.00937954$  & $2.92858 \pm 0.000259869$ \\
		200 & $0.421040 \pm 0.00577534$& $0.663079 \pm 0.00304706$  & $8.55309 \pm 0.000527340$ \\
		\bottomrule
	\end{tabular}
\end{subtable}
\begin{subtable}{\textwidth}
	\caption{\textbf{Scalability with Sample Size}. Shown are scalability metrics for \ourmethod when varying the sample size   $\nsamples$ at fixed number of features $\nfeat=10$. Values are mean $\pm$ 95\% CI.}
	\label{tab:additive-gp-scalability-seqlen} \vspace*{1em}
	\centering
	\begin{tabular}{rrrr}
		\toprule
		$\nsamples$ & Top. div. ($\divtop$) & Eval wall (s) & Peak mem (GB) \\
		\midrule
		128  & $0.254131 \pm 0.0207305$ & $0.0205390 \pm 0.00128916$ & $0.0397114 \pm 0.00000329479$ \\
		256  & $0.229872 \pm 0.0218224$ & $0.0196907 \pm 0.000812033$& $0.0560856 \pm 0.00000642391$ \\
		512  & $0.199187 \pm 0.0206693$ & $0.0268431 \pm 0.00181145$ & $0.0891907 \pm 0.0000126822$ \\
		1024 & $0.204096 \pm 0.0179684$ & $0.0458761 \pm 0.00885146$ & $0.151067 \pm 0.0000251987$ \\
		2048 & $0.197760 \pm 0.0187264$ & $0.149261 \pm 0.00514696$  & $0.278104 \pm 0.0000855268$ \\
		4096 & $0.172316 \pm 0.0184299$ & $0.209578 \pm 0.0120166$   & $0.532027 \pm 0.000107620$ \\
		\bottomrule
	\end{tabular}
\end{subtable}
\end{table}

Computationally, runtime and memory usage scale predictably with problem size. Increasing the number of variables $\nfeat$ has a substantially stronger effect on runtime and memory consumption than increasing the sequence length $\nsamples$. Despite being trained primarily on graphs with $5$--$10$ variables, \ourmethod{} generalizes to substantially larger settings up to $\nfeat=200$ variables while maintaining moderate topological divergence.

Order quality degrades gradually as $\nfeat$ increases, reflecting the increasing difficulty of predicting globally consistent orders in high-dimensional settings. In contrast, the topological divergence remains comparatively stable across sequence lengths and slightly improves for larger $\nsamples$, suggesting that the model is not strongly data-limited in this regime and can extract sufficient causal signal from relatively short sequences.

Inference of the causal order itself remains near-instant once the forward pass is completed.

\subsection{Real-World Data: Analysis of Biological Intervention Effects}\label{app:additional-sachs-results-extended}

We additionally analyze how the causal orders inferred by \ourmethod{} change across the interventional conditions of the Sachs signaling benchmark \citep{sachs:05:sachs}. Since the dataset contains multiple experimentally perturbed signaling conditions with known intervention targets, it provides a useful setting to study whether learned orderings reflect biologically meaningful system reorganization under intervention.
 
\paragraph{Setup.} For each experimental condition $\cond$, we infer a topological ordering $\hatorder_\cond$ over the 11 signaling variables and compare it to the reference condition
$
c_0 = \texttt{cd3cd28},$
which is commonly treated as the closest approximation to an observational baseline. The discovered orders and their topological divergences relative to the consensus graph are summarized in Table~\ref{tab:sachs-orders} and Table~\ref{tab:sachs-orders-topdiv}. The evaluated conditions and intervention targets are summarized in Table~\ref{tab:sachs-consensus-interventions}, while the underlying consensus signaling graph is shown in Figure~\ref{fig:sachs-consensuses}.
 
\begin{figure} 
\begin{tikzpicture}
	\begin{axis}[
		width=10cm,
		height=8cm,
		title={Rank shifts vs. \texttt{cd3cd28}},
		xlabel={Node},
		ylabel={Condition},
		xtick={0,1,2,3,4,5,6,7,8,9,10},
		xticklabels={PKA,PKC,Plcg,PIP3,Erk,Jnk,Akt,PIP2,P38,Mek,Raf},
		xticklabel style={rotate=45, anchor=east, font=\scriptsize},
		ytick={0,1,2,3,4,5,6,7,8},
		yticklabels={\texttt{cd3cd28},\texttt{cd3cd28\_icam2},\texttt{cd3cd28\_u0126},\texttt{cd3cd28\_aktinhib},\texttt{cd3cd28icam2\_aktinhib},\texttt{cd3cd28\_g0076},\texttt{cd3cd28icam2\_g0076},\texttt{cd3cd28\_ly},\texttt{cd3cd28\_psitect}},
		yticklabel style={font=\scriptsize},
		enlargelimits=false,
		axis on top,
		axis lines=left,
		axis line style={draw=none},
		colorbar,
		point meta min=-9,
		point meta max=9, 
colormap={rankmap}{rgb255(0cm)=(255,166,0);rgb255(1.2cm)=(255,166,0);rgb255(2.4cm)=(255,220,80);rgb255(3cm)=(245,248,252);rgb255(3.6cm)=(0,125,118);rgb255(4.8cm)=(14,77,104);rgb255(6cm)=(14,77,104)},
		colorbar style={ylabel={Rank shift}},
			]
			
			\addplot[
			matrix plot*,
			mesh/cols=11,
			point meta=explicit,
			] table[x=x, y=y, meta=rank_shift] {figs/texdata/sachs_rank_shift_heatmap.tsv};
			
			\addplot[
			only marks,
			mark=none,
			nodes near coords,
			point meta=explicit symbolic,
			every node near coord/.append style={font=\tiny, text=black, anchor=center},
			] table[x=x, y=y, meta=label] {figs/texdata/sachs_rank_shift_heatmap.tsv};
			
		\end{axis}
	\end{tikzpicture}

	\begin{tikzpicture}
		\begin{axis}[
			width=10cm,
			height=8cm,
			title={Pairwise order flips vs. \texttt{cd3cd28}},
			xlabel={Node},
			ylabel={Condition},
			xtick={0,1,2,3,4,5,6,7,8,9,10},
			xticklabels={PKA,PKC,Plcg,PIP3,Erk,Jnk,Akt,PIP2,P38,Mek,Raf},
			xticklabel style={rotate=45, anchor=east, font=\scriptsize},
			ytick={0,1,2,3,4,5,6,7,8},
			yticklabels={\texttt{cd3cd28},\texttt{cd3cd28\_icam2},\texttt{cd3cd28\_u0126},\texttt{cd3cd28\_aktinhib},\texttt{cd3cd28icam2\_aktinhib},\texttt{cd3cd28\_g0076},\texttt{cd3cd28icam2\_g0076},\texttt{cd3cd28\_ly},\texttt{cd3cd28\_psitect}},
			yticklabel style={font=\scriptsize},
			enlargelimits=false,
			axis on top,
			axis lines=left,
			axis line style={draw=none},
			colorbar,
			point meta min=0,
			point meta max=1,colormap={blueseq}{rgb255(0cm)=(245,248,252);rgb255(2cm)=(200,220,232);rgb255(4cm)=(0,125,118);rgb255(6cm)=(14,77,104)},
			colorbar style={ylabel={Fraction of pairwise order flips}},
				]
				
				\addplot[
				matrix plot*,
				mesh/cols=11,
				point meta=explicit,
				] table[x=x, y=y, meta=flip_fraction] {figs/texdata/sachs_pairwise_flip_heatmap.tsv};
				
				\addplot[
				only marks,
				mark=none,
				nodes near coords,
				point meta=explicit symbolic,
				every node near coord/.append style={font=\tiny, text=black, anchor=center},
				] table[x=x, y=y, meta=label] {figs/texdata/sachs_pairwise_flip_heatmap.tsv};
				
			\end{axis}
		\end{tikzpicture}
\caption{\textbf{Changes in Inferred Orders under Intervention.}
	The heatmaps compare the topological orders inferred by \ourmethod{} across all Sachs experimental conditions relative to the reference condition \texttt{cd3cd28}. 
	Top: node-wise rank shifts, where positive (negative) values indicate downstream (upstream) movement in the inferred order compared to the baseline condition. 
	Bottom: fraction of pairwise order flips involving each node, measuring how often its relative ordering with other variables changes compared to the baseline. 
	Rows correspond to experimental conditions and columns to signaling variables. Strong localized changes are observed for targeted pathway perturbations such as MEK inhibition (\texttt{cd3cd28\_u0126}) and phosphoinositide perturbation (\texttt{cd3cd28\_psitect}), while PKC inhibition (\texttt{cd3cd28\_g0076}) induces broader global reorganization across the signaling network.} 
		\label{fig:order-heat-sachs}
	\end{figure}
	 
\paragraph{Metrics.}
Beyond the global topological divergence $\divtop$, we analyze two local measures of intervention-dependent reorganization.

For each node $X$, we compute the  rank shift as 
\[
\Delta_{\texttt{c}}(X)
=
\mathrm{rank}(\hatorder_{\texttt{c}}, X)
-
\mathrm{rank}(\hatorder_{\texttt{cd3cd28}}, X),
\]
where positive (negative) values indicate downstream (upstream) movement in the inferred order.

We additionally compute pairwise order flips, i.e., changes in the relative ordering between node pairs across conditions. For each node, we aggregate the number of flipped pairwise relations relative to the baseline and normalize by $(p-1)$ to obtain a fraction in $[0,1]$.

Figure~\ref{fig:order-heat-sachs} summarizes rank shifts and pairwise flips across all experimental conditions. Figure~\ref{fig:sachs-all-intervention-graph-flips} visualizes pairwise flips on the consensus signaling graph, while Figure~\ref{fig:sachs-intervention-order-lineups} shows some representative examples of how the inferred orders themselves change relative to the baseline condition.

\paragraph{Analysis.} We observe clear intervention-dependent reorganizations in the inferred orders. MEK inhibition (\texttt{cd3cd28\_u0126}) induces strong changes involving \texttt{Mek}, \texttt{Erk}, and \texttt{Raf}, consistent with perturbation of the MAPK pathway and known feedback within the Raf--MEK--ERK cascade \citep{lake2016erk}. PKC inhibition (\texttt{cd3cd28\_g0076}) produces the strongest global reorganization across the signaling network, which is biologically plausible given the central role of \texttt{PKC} as a signaling hub \citep{isakov2002pkc}. In contrast, phosphoinositide perturbation (\texttt{cd3cd28\_psitect}) yields comparatively localized changes centered around \texttt{PIP2} and nearby signaling components.

Other interventions induce weaker or more diffuse effects. PI3K inhibition (\texttt{cd3cd28\_ly}) produces only moderate changes involving \texttt{PIP3} and \texttt{Akt}, while Akt inhibition (\texttt{cd3cd28\_aktinhib}) affects a broader set of MAPK-related nodes rather than remaining localized to the Akt pathway itself \citep{vivanco2002pi3k}.

Overall, the inferred orders reflect biologically meaningful intervention-dependent reorganizations of the signaling system, suggesting that \ourmethod{} captures pathway-specific changes beyond purely observational dependencies.

\begin{figure}[t]
	\centering
			\begin{subfigure}{.32\linewidth}
		\centering
		\scalebox{0.5}{
			\begin{tikzpicture}[
				prot/.style={draw=tcss-slate3, circle, minimum size=8mm, inner sep=1pt},
				iv/.style={
					draw=white,
					fill=white,white,
					circle,
					dashed,
					thick,
					minimum size=7mm,
					inner sep=1pt,
					font=\bfseries
				},
				>=Latex
				]
				
				\node[prot] (pkc) at (0,4) {PKC};
				
				\node[prot] (pka) at (0,2.4) {PKA};
				\node[prot] (raf) at (1.7,2.4) {Raf};
				
				\node[prot] (jnk) at (-1.7,2.4) {Jnk};
				\node[prot] (p38) at (-1.7,1.1) {P38};
				
				\node[prot] (plcg) at (-2.4,0.2) {Plc${}_{\gamma}$};
				
				\node[prot] (pip2) at (-3.3,-1.0) {PIP2};
				\node[prot] (pip3) at (-1.7,-1.0) {PIP3};
				\node[prot] (akt) at (0,-1.0) {Akt};
				\node[prot] (erk) at (1.7,-1.0) {Erk};
				
				\node[prot] (mek) at (1.7,1.1) {Mek};
				
				\path[-{Latex[length=2mm,width=1mm]}, line width=.9pt, tcss-slate3]
				(raf) edge[] (mek)
				(mek) edge[] (erk)
				(plcg) edge[bend right=10] (pip2)
				(plcg) edge[bend left=10] (pip3)
				(pip3) edge[] (pip2)
				(pip3) edge[] (akt)
				(pip2) edge[bend left=35] (pkc)
				(pkc) edge[bend left=12] (raf)
				(pkc) edge[] (mek)
				(pkc) edge[bend right=12] (jnk)
				(pkc) edge[] (p38)
				(pka) edge[] (raf)
				(pka) edge[] (mek)
				(pka) edge[] (erk)
				(pka) edge[] (akt)
				(pka) edge[] (jnk)
				(pka) edge[] (p38)
				;
				
				\node[iv] (i1) at (-1.45,5.45) {1};
				\node[iv] (i2) at (1.45,5.45) {2};
				\node[iv] (i3) at (-1.55,4.15) {3}; 
				\node[iv] (i4) at (1.95,4.00) {4};
				\node[iv] (i5) at (0,-2.35) {5};
				\node[iv] (i6) at (3.25,1.1) {6};
				\node[iv] (i7) at (-1.7,-2.35) {7};
				\node[iv] (i8) at (-3.30,-2.35) {8};
				
			\end{tikzpicture}
		} 
		\caption{\textbf{\texttt{cd3cd28}.}}
	\end{subfigure} 
	\hfill 
	\begin{subfigure}{0.32\linewidth}
		\centering
		\ResetSachsInterventions
		\ShowSachsInhibitorSix
		\ShowSachsFlipsUzero
		\scalebox{0.5}{\SachsFlipGraph{\texttt{cd3cd28\_u0126}}{0.265}}
		\caption{\textbf{\texttt{cd3cd28\_u0126}.} }
		\end{subfigure}	\hfill 
			\begin{subfigure}{0.32\linewidth}
			\centering
			\ResetSachsInterventions
			\ShowSachsInhibitorFour
			\ShowSachsFlipsGzero
			\scalebox{0.5}{\SachsFlipGraph{\texttt{cd3cd28\_g0076}}{0.765}}
			\caption{\textbf{\texttt{cd3cd28\_g0076}.} }
		\end{subfigure}
	 \vspace{0.7em}
			\begin{subfigure}{0.32\linewidth}
			\centering
			\ResetSachsInterventions
			\ShowSachsInhibitorEight
			\ShowSachsFlipsPsitect
			\scalebox{0.5}{\SachsFlipGraph{\texttt{cd3cd28\_psitect}}{0.235}}
			\caption{\textbf{\texttt{cd3cd28\_psitect}.} }
		\end{subfigure}
		\hfill
			\begin{subfigure}{0.32\linewidth}
			\centering
			\ResetSachsInterventions
			\ShowSachsInhibitorSeven
			\ShowSachsFlipsLy
			\scalebox{0.5}{\SachsFlipGraph{\texttt{cd3cd28\_ly}}{0.191}}
			\caption{\textbf{\texttt{cd3cd28\_ly}.} }
		\end{subfigure}  
			\hfill
		\begin{subfigure}{0.32\linewidth}
			\centering
			\ResetSachsInterventions
			\ShowSachsInhibitorFive
			\ShowSachsFlipsAktInhib
			\scalebox{0.5}{\SachsFlipGraph{\texttt{cd3cd28\_aktinhib}}{0.294}}
			\caption{\textbf{\texttt{cd3cd28\_aktinhib}.} }
		\end{subfigure} 
	\caption{\textbf{Changes in Inferred Orders under Intervention (extends Fig. \ref{fig:sachs-intervention-graph-flips-small}).} 
		Nodes are colored by pairwise flip fraction relative to \texttt{cd3cd28}; edges show the fixed consensus graph. Activator and inhibitor markers indicate the corresponding experimental perturbations. Panel titles report the global topological divergence $\divtop$ for \ourmethod where available.}
	\label{fig:sachs-all-intervention-graph-flips}
\end{figure}  
\begin{figure}[h!]
	\centering 
	\begin{subfigure}{0.48\linewidth}
		\centering
		\scalebox{0.78}{\OrderPanel{\texttt{u0126}}{0.265}{\BaselineOrder}{\OrderUzero}}
		\caption{\texttt{u0126} (MEK inhibition).}
	\end{subfigure}
	\hfill
	\begin{subfigure}{0.48\linewidth}
		\centering
		\scalebox{0.78}{\OrderPanel{\texttt{g0076}}{0.765}{\BaselineOrder}{\OrderGzero}}
		\caption{\texttt{g0076} (PKC inhibition).}
	\end{subfigure}
	
	\vspace{0.8em}
	
	\begin{subfigure}{0.48\linewidth}
		\centering
		\scalebox{0.78}{\OrderPanel{\texttt{psitect}}{0.235}{\BaselineOrder}{\OrderPsitect}}
		\caption{\texttt{psitect} (Phosphoinositide perturbation).}
	\end{subfigure}
	\hfill
	\begin{subfigure}{0.48\linewidth}
		\centering
		\scalebox{0.78}{\OrderPanel{\texttt{ly}}{0.191}{\BaselineOrder}{\OrderLy}}
		\caption{\texttt{ly} (PI3K inhibition).}
	\end{subfigure} 
	\caption{\textbf{Example Changes in Inferred Orders.}
		Each panel compares the baseline order inferred by \ourmethod on \texttt{cd3cd28} to the order inferred under the intervention. Nodes in the baseline order are shown in light blue;   nodes in the interventional condition are colored in dark blue depending on their pairwise flip fraction relative to the baseline.}
	\label{fig:sachs-intervention-order-lineups}
\end{figure}

\begin{table}[t]
	\centering 
	\caption{\textbf{Real-World Causal Order Inference.} Listed are the topological orders  discovered by \ourmethod in each experimental condition in the data by \cite{sachs:05:sachs}, as well as how they compare to the consensus network.}
	\begin{subtable}{\textwidth} 
		\caption{\textbf{Discovered Topological Orders per Condition.} Listed are the topological orders $\ordercond$  discovered in $\cond$ by \ourmethod in each experimental condition $\cond$.}
		\label{tab:sachs-orders}
		\vspace*{.5em}
		\centering
		\small
		\begin{tabular}{ll}
			\toprule
			Condition $\cond$ & Topological order $\ordercond$\\
			\midrule
			\texttt{cd3cd28}
			& PKA $\prec$ PKC $\prec$ Plcg $\prec$ PIP3 $\prec$ Erk $\prec$ Jnk $\prec$ Akt $\prec$ PIP2 $\prec$ P38 $\prec$ Mek $\prec$ Raf \\
			
			\texttt{cd3cd28\_icam2}
			& PKA $\prec$ PIP3 $\prec$ PKC $\prec$ Plcg $\prec$ P38 $\prec$ PIP2 $\prec$ Jnk $\prec$ Akt $\prec$ Erk $\prec$ Raf $\prec$ Mek \\
			
			\texttt{cd3cd28\_u0126}
			& PKC $\prec$ Erk $\prec$ PKA $\prec$ Mek $\prec$ Raf $\prec$ Jnk $\prec$ Plcg $\prec$ PIP3 $\prec$ P38 $\prec$ PIP2 $\prec$ Akt \\
			
			\texttt{cd3cd28\_aktinhib}
			& PKA $\prec$ Jnk $\prec$ Plcg $\prec$ PKC $\prec$ P38 $\prec$ PIP2 $\prec$ PIP3 $\prec$ Erk $\prec$ Akt $\prec$ Mek $\prec$ Raf \\
			
			\texttt{cd3cd28icam2\_aktinhib}
			& PKA $\prec$ Plcg $\prec$ PKC $\prec$ PIP3 $\prec$ P38 $\prec$ PIP2 $\prec$ Akt $\prec$ Jnk $\prec$ Raf $\prec$ Mek $\prec$ Erk \\
			
			\texttt{cd3cd28\_g0076}
			& P38 $\prec$ PIP2 $\prec$ Mek $\prec$ Jnk $\prec$ Plcg $\prec$ Raf $\prec$ Akt $\prec$ PIP3 $\prec$ PKA $\prec$ PKC $\prec$ Erk \\
			
			\texttt{cd3cd28icam2\_g0076}
			& PKA $\prec$ Mek $\prec$ Akt $\prec$ PIP2 $\prec$ Raf $\prec$ P38 $\prec$ Plcg $\prec$ PIP3 $\prec$ Jnk $\prec$ Erk $\prec$ PKC \\
			
			\texttt{cd3cd28\_ly}
			& PKA $\prec$ Plcg $\prec$ PKC $\prec$ PIP3 $\prec$ Erk $\prec$ Jnk $\prec$ P38 $\prec$ Akt $\prec$ PIP2 $\prec$ Mek $\prec$ Raf \\
			
			\texttt{cd3cd28\_psitect}
			& PIP2 $\prec$ Plcg $\prec$ PKA $\prec$ PIP3 $\prec$ PKC $\prec$ Akt $\prec$ Jnk $\prec$ P38 $\prec$ Erk $\prec$ Raf $\prec$ Mek \\\bottomrule
		\end{tabular}
	\end{subtable} 
	\begin{subtable}{\textwidth}  
		\caption{\textbf{Discovered Topological Orders per Condition vs. Consensus Graph. } Edge Viol. and Top. Div. are edge violation counts, respectiveky the  topological divergence $\divtop$,   for the order $\ordercond$ discovered  \ourmethod, computed with respect to the consensus graph $\Gobs$.}
		\label{tab:sachs-orders-topdiv}
		\vspace*{.5em}
		\centering
		\small
		\begin{tabular}{lll}
			\toprule
			Condition $\cond$
			& Edge viol. 
			& Div. Top. \\
			\midrule
			\texttt{cd3cd28}
			& 3
			& 0.206 \\
			
			\texttt{cd3cd28\_icam2}
			& 3
			& 0.235 \\
			
			\texttt{cd3cd28\_u0126}
			& 4
			& 0.265 \\
			
			\texttt{cd3cd28\_aktinhib}
			& 5
			& 0.294 \\
			
			\texttt{cd3cd28icam2\_aktinhib}
			& 1
			& 0.132 \\
			
			\texttt{cd3cd28\_g0076}
			& 13
			& 0.765 \\
			
			\texttt{cd3cd28icam2\_g0076}
			& 8
			& 0.544 \\
			
			\texttt{cd3cd28\_ly}
			& 3
			& 0.191 \\
			
			\texttt{cd3cd28\_psitect}
			& 3
			& 0.235 \\\bottomrule
		\end{tabular}
	\end{subtable}
\end{table}

  \clearpage

\section{Discussion}
\subsection{Limitations}
Further limitations include that \ourmethod currently supports only real-valued features, and our theoretical guarantees apply primarily to settings such as additive noise models. 
Extending the approach to mixed-type data, and establishing corresponding theoretical properties, presents an important direction for future work. 
Our method also  focuses on learning causal orderings rather than full causal graphs. 
 Extending the framework to learn sparser or more fine-grained causal structures is an interesting future direction but may require additional inductive biases or assumptions.

In practice, we observe that \ourmethod pretrained solely on synthetic data does not yet match the performance of classical imputation methods, and fine-tuning on real-world datasets is required to achieve competitive results. 
Our current synthetic priors are best suited for generating mechanisms with known identifiability properties. 
Developing more realistic priors to improve pretraining quality is a particularly promising avenue for future research.

\subsection{Broader impacts}
\ourmethod advances the state of the art in tabular foundation models by enabling them to infer and integrate causal structure directly from data without structural supervision. 
This principled approach can improve robustness to distribution shifts and interventions, 
but it also introduces risks when variable orderings are misidentified or causal assumptions are incomplete. 
The structural inferences produced by \ourmethod are hypotheses rather than ground truth and 
therefore require further validation, especially in high-stakes applications. 
As with other causal discovery methods, we recommend that users carefully assess the validity of the 
inferred structure, incorporate domain expertise, 
and perform appropriate sensitivity analyses before deploying \ourmethod in critical settings.




\end{document}